\renewcommand{\thetable}{\arabic{table}}
\newtheorem{definition}{Definition}
\newtheorem{prop}{Proposition}
\begin{document}

\title{S$^3$Attention: Improving Long Sequence Attention with Smoothed Skeleton Sketching}

\author{~\\Xue Wang, Tian Zhou, Jianqing Zhu, Jialin Liu, Kun Yuan, Tao Yao, Wotao Yin, Rong Jin, HanQin~Cai\\~\\
\thanks{
This work was partially supported by NSF DMS 2304489. 
}
\thanks{X.~Wang, T.~Zhou, and W.~Yin are with Alibaba Group, Bellevue, WA 98004, USA (e-mail: \{xue.w, tian.zt, wotao.yin\}@alibaba-inc.com).}%
\thanks{J.~Zhu is with Computer, Electrical and Mathematical Science and Engineering Division, King Abdullah University of Science and Technology, Thuwal 23955, Saudi Arabia (e-mail: zjq941026@gmail.com).}%
\thanks{J.~Liu is with the Department of Statistics and Data Science, University of Central Florida, Orlando, FL 32816, USA (e-mail: jialin.liu@ucf.edu).}%
\thanks{K.~Yuan is with Center for Machine Learning Research, Peking University, Beijing 100871, China (e-mail: kunyuan@pku.edu.cn).}%
\thanks{T.~Yao is with Antai College of Economics and Management, Shanghai Jiao Tong University, Shanghai 200030, China (e-mail: taoyao@sjtu.edu.cn).}%
\thanks{R.~Jin is with Meta, Menlo Park, CA 94025, USA (e-mail: rongjinemail@gmail.com).}%
\thanks{H.Q.~Cai is with the Department of Statistics and Data Science and the Department of Computer Science, University of Central Florida, Orlando, FL 32816, USA (Corresponding author, e-mail: hqcai@ucf.edu).}%
}

\markboth{}%
{Wang \MakeLowercase{\textit{et al.}}: S$^3$Attention}


\maketitle

\begin{abstract}
Attention based models have achieved many remarkable breakthroughs in numerous applications. However, the quadratic complexity of Attention makes the vanilla Attention based models hard to apply to long sequence tasks. Various improved Attention structures are proposed to reduce the computation cost by inducing low rankness and approximating the whole sequence by sub-sequences. The most challenging part of those approaches is maintaining the proper balance between information preservation and computation reduction: the longer sub-sequences used, the better information is preserved, but at the price of introducing more noise and computational costs. In this paper, we propose a smoothed skeleton sketching based Attention structure, coined S$^3$Attention, which significantly improves upon the previous attempts to negotiate this trade-off. S$^3$Attention has two mechanisms to effectively minimize the impact of noise while keeping the linear complexity to the sequence length: a smoothing block to mix information over long sequences and a matrix sketching method that simultaneously selects columns and rows from the input matrix. We verify the effectiveness of S$^3$Attention both theoretically and empirically. Extensive studies over Long Range Arena (LRA) datasets and six time-series forecasting show that S$^3$Attention significantly outperforms both vanilla Attention and other state-of-the-art variants of Attention structures. 
\vspace{0.15in}
\end{abstract}

\begin{IEEEkeywords}
Deep Learning, Attention, Skeleton Approximation, Dimensionality Reduction.
\vspace{0.15in}
\end{IEEEkeywords}

\section{Introduction}
\label{Introduction}

\IEEEPARstart{M}{odern} Attention based models, first introduced in \citep{vaswani2017attention}, have made significant contributions in several areas of artificial intelligence, including natural language processing (NLP) \citep{devlin2018bert,brown2020language,liu2019roberta,clark2020electra}, computer vision (CV) \citep{dosovitskiy2020image,liu2021swin,touvron2021training,yuan2021volo,zhou2021elsa}, and time series forecasting \citep{xu2021autoformer,FedFormer}. These models have also been applied in a variety of long sequences mining tasks, such as traffic \citep{deshpande2021long,elmi2021deepfec}, health care \citep{lim2018forecasting,zhang2018multi}, retail \citep{bose2017probabilistic,chauhan2020time}, security industry \citep{surveillance}, and web applications \citep{Faloutsos2020wwwtut,hou2022multi,zhou2020domain}. The Attention scheme efficiently captures long-term global and short-term local correlations when the length of the token sequences is relatively small. However, due to the quadratic complexity of standard Attention, many approaches have been developed to reduce the computational complexity for long sequences (e.g., \citep{zhu2021long}). Most of them try to exploit the special patterns of the Attention matrix, such as low rankness, locality, sparsity, or graph structures. One group of approaches is to build a linear approximation for the $\mathrm{softmax}$ operator (e.g., \citep{choromanski2020rethinking,chen2021scatterbrain,chowdhury2021learning,qin2021cosformer}). Despite the efficiency of the linear approximation, these approximation methods often perform worse than the original $\mathrm{softmax}$ based Attention. More discussion of efficient Attention-based models for long sequences can be found in the section of related work.

In this work, we focus on approaches that assume a low-rank structure of the input matrix. They approximate the global information in long sequences by sub-sequences (i.e., short sequences) of landmarks, and only compute Attention between queries and selected landmarks (e.g., \citep{zhu2021long,zhu2021h,nguyen2021fmmformer,ma2021luna}). Although those models enjoy linear computational cost and often better performance than vanilla Attention, they face one major challenge, i.e., how to balance between information preservation and noise reduction. By choosing a larger number of landmarks, we are able to preserve more global information but at the price of introducing more noise into the sequential model and more computational cost. 

In this work, we propose an efficient Attention architecture, termed S$^3$Attention, that introduces two mechanisms to address the balance explicitly as illustrated in Figure 1. First, we introduce a smoothing block into the Attention architecture. It effectively mixes global information over the long sequences by Fourier Transformation and local information over the sequences by a convolution kernel. Through the information mixture, we can reduce the noise for individual tokens over the sequences, and at the same time, improve their representativeness for the entire sequences. Second, we introduce a matrix sketching technique to approximate the input matrix by a smaller number of rows and columns. Standard Attention can be seen as reweighing the columns of the value matrix. Important columns are assigned high attention weights and remain in the output matrix, while small attention weights eliminate insignificant columns. The Attention mechanism is equivalent to column selection if we replace the $\mathrm{softmax}$ operator with the corresponding $\mathrm{argmax}$ operator. However, sampling only columns may not generate a good summary of the matrix, and could be subjected to the noises in individual columns. We address this problem by exploiting Skeleton Sketching technique \citep{drineas2008relative,chiu2013sublinear,cai2021ircur,cai2021rcur,cai2021modewisedecomp,hamm2022RieCUR,cai2023ccs,cai2024robust,cai2024rccs} in the matrix approximation community, which is also known as CUR approximation. Theoretically, for a rank-$r$ matrix $\bm{X}\in\mathbbm{R}^{n\times d}$, we can take $\mathcal{O}(r\log d)$ column samples and $\mathcal{O}(r\log n)$ row samples to construct a so-called Skeleton approximation $\bm{X} \approx \bm{C}\bm{U}\bm{R}$, where $\bm{C}$ and $\bm{R}$ are matrices consisting of sampled columns and rows of $\bm{X}$ respectively, and $\bm{U}$ is the pseudo-inverse of $\bm{C}$ and $\bm{R}$'s intersection. By combining these mechanisms, we found, both theoretically and empirically, that S$^3$Attention is able to preserve global information over long sequences and reduce the impact of noise simultaneously, thus leading to better performance than state-of-the-art variants of Attention based models for long sequences, without having to sacrifice the linear complexity w.r.t.~sequence length.

In short, we summarize our main contributions as follows: 
\begin{enumerate}[leftmargin=0.3in, label=\arabic*.]
\item We propose the S$^3$Attention, an efficient model that integrates smoother column Attention and row Attention components to unfold a randomized linear matrix sketching algorithm.

\item By randomly selecting a fixed number of rows and columns, the proposed model achieves near-linear computational complexity and memory cost. The effectiveness of this selection method is verified both theoretically and empirically.

\item We conduct extensive experiments over Long-term sequences, long-term time series forecasting and GLUE tasks. In particular, the Long Range Arena benchmark \citep{tay2021long}, achieves an average accuracy of 64\% and 66\% with fixed parameters (suggested setting in \citep{tay2021long,mathieu2014fast}) and fine-tuned parameters respectively. It improves from 62\% of the best Attention-type model. Moreover, it also has a comparable performance with the recent state-of-the-art long-term time series forecasting models for long-term time series forecasting and GLUE tasks. 
\end{enumerate}

\vspace{0.05in}\noindent\textbf{Organization.}~~ We structure the rest of this paper as follows: In Section~\ref{sec:related work}, we briefly review the relevant literature related to efficient Attention based models. Section~\ref{sec:models} introduces the model structure and performs a theoretical analysis to justify the proposed model. We empirically verify the efficiency and accuracy of S$^3$Attention in Section~\ref{sec:experiment} we discuss limitations and future directions in Section~\ref{sec:conclusion}. The experimental details are provided in the supplementary material.

\section{Related Work} \label{sec:related work}

This section provides an overview of the literature mainly focusing on efficient Attention based models. The techniques include sparse or local Attention, low rankness, and kernel approximation. We refer the reader interested in their details to the survey \citep{tay2020efficient}.

\vspace{0.05in}\noindent\textbf{Sparse Attention.}~~ The general idea of these methods is restricting the query token to perform Attention only within a specific small region, such as its local region or some global tokens. In this setting, the Attention matrix becomes sparse compared to the original one.   \citep{qiu2019blockwise} proposes BlockBert, which introduces sparse block structures into the Attention matrix by multiplying a masking matrix. \citep{parmar2018image} applies Attention within blocks for the image generation task. \citep{liu2018generating} divides the whole sequences into blocks and uses a stride convolution to reduce the model complexity. However, these block-type Transformers ignore the connections among blocks. To address this issue, Transformer-XL \citep{dai2019transformer} and Compressive Transformer \citep{rae2019compressive} propose a recurrence mechanism to connect multiple blocks.  Transformer-LS \citep{zhu2021long} combines local Attention with a dynamic projection to capture long-term dependence. 
\citep{tay2020sparse} uses a meta-sorting network to permute over sequences and quasi-global Attention with local windows to improve memory efficiency.

Another approach in this category is based on stride Attention. Longformer \citep{beltagy2020longformer} uses dilated sliding windows to obtain a sparse Attention matrix. Sparse Transformers \citep{child2019generating} consider approximating a dense Attention matrix by several sparse factorization methods. In addition, some methods reduce the complexity by clustering tokens. For example, Reformer \citep{kitaev2020reformer} uses a hash similarity measure to cluster tokens, and Routing Transformer \citep{roy2021efficient} uses k-means to cluster tokens. BigBird \citep{zaheer2020big} proposes a generalized Attention mechanism described by a directed graph to reduce Attention complexity.
\citep{lee2021fnet} considers using 2D Fourier Transformation to mix the token matrix directly. \citep{tan2021ponet} uses max pooling scheme to reduce the computation costs.

\vspace{0.05in}\noindent\textbf{Low-rank and Kernel Methods.}~~ Inducing low rankness into the Attention matrix can quickly reduce the complexity and the kernel approximation is widely applied in efficient low-rank approximation. Linformer \citep{wang2020linformer} and Luna \citep{ma2021luna} approximate $\mathrm{softmax}$  with linear functions, which yield a linear time and space complexity. \citep{choromanski2020rethinking,peng2021random} use random features tricks and reach promising numerical performance.  \citep{winata2020lightweight} proposes Low-Rank Transformer based on matrix factorization. FMMformer \citep{nguyen2021fmmformer} combines the fast multipole method with the kernel method. Synthesizer \citep{tay2005synthesizer} uses a random low-rank matrix to replace the Attention matrix. Nystr{\"o}mformer \citep{xiong2021nystromformer} adopts the Nystr{\"o}m method to approximate standard Attention. Linear Transformer \citep{katharopoulos2020transformers} expresses Attention as a linear dot-product of kernel feature maps. \citep{zhu2021h} applies the Multigrid method to efficiently compute the Attention matrix recursively. Cosformer \citep{qin2021cosformer} develops a cosine-based re-weighting mechanism to linearize the $\mathrm{softmax}$ function. \citep{chen2021scatterbrain} proposes the Scatterbrain, which unifies locality-sensitive hashing and the kernel method into Attention for accurate and efficient approximation.

\vspace{0.05in}\noindent\textbf{Sequence Length Reduction.}~~  Reducing sequence length is an efficient means to reduce computational costs. Perceiver IO  \citep{jaegle2021perceiver} encodes inputs to a latent space whose size is typically smaller than the inputs and outputs, making the process computationally scalable to even very large inputs and outputs. Funnel-Transformer \citep{dai2020funnel} uses pooling tricks to reduce sequence length, significantly saving both FLOPs and memory. Swin Transformer \citep{liu2021swin} proposes a shifted windows method in vision Transformer, and the resulting complexity is linear in the image size. XCiT \citep{ali2021xcit} considers an Attention over the hidden dimension for vision tasks. Charformer \citep{tay2021charformer} downsamples the sequences of words to construct latent subwords.

\vspace{0.05in}\noindent\textbf{Improved Recurrent Neural Network.}~~ Another research track to solve the long sequence tasks is to improve the Recurrent Neural Network (RNN). \citep{gu2020hippo} propose to view RNN as the state space model and use optimal polynomial projections to improve its memorization ability. The follow-up works \citep{gu2021efficiently,gupta2022diagonal,gu2021combining,smith2023simplified,gupta2022diagonal,hasani2023liquid,gu2023how} consider more sophisticated designs and obtain better performance and higher efficiency. The concurrent work \citep{ma2023mega} proposes coupling the moving average type of RNN structure with single-headed Attention and reaches quite promising results. Different from those works, in this paper, we consider a different research angle and focus on making improvements inside the Attention structure. 

\begin{figure}[h]
\centering

\includegraphics[width=\linewidth]{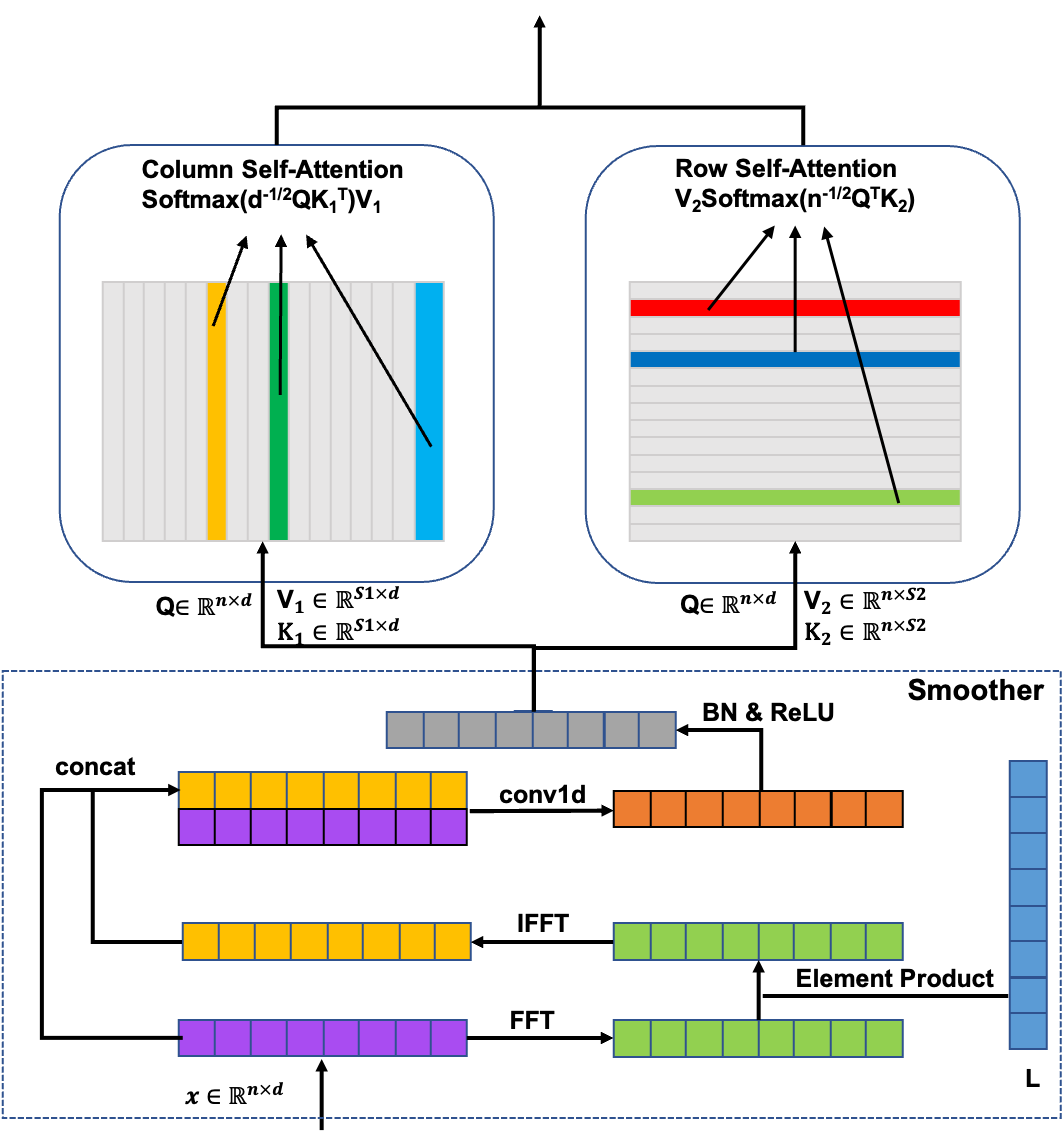}
\caption{Illustration of the architecture of S$^3$Attention. }
\label{fig:boost}
\end{figure}

\section{S$^3$Attention} \label{sec:models}
We start by going over the vanilla Attention. For a sequence of length $n$, the vanilla Attention is dot-product type \citep{vaswani2017attention}. Following standard notation, the Attention matrix $\bm{A}\in\mathbb{R}^{n\times n}$ is defined as:
\begin{align*}
    \bm{A} = \mathrm{softmax}\left(\frac{1}{\sqrt{d}}\bm{Q}\bm{K}^\top \right),
\end{align*}
where $\bm{Q}\in\mathbb{R}^{n\times d}$ denotes the queries while $\bm{K}\in\mathbbm{R}^{n\times d}$ denotes the keys, and $d$ represents the hidden dimension. By multiplying the attention weights $\bm{A}$ with the values $\bm{V}\in\mathbb{R}^{n\times d}$, we can calculate the new values as $\hat{\bm{V}} = \bm{A}\bm{V}$.

Intuitively, the Attention is the weighted average over the old ones, where the weights are defined by the Attention matrix $\bm{A}$. In this paper, we consider generating $\bm{Q}$, $\bm{K}$ and $\bm{V}$ via the linear projection of the input
token matrix $\bm{X}$:
\begin{align}
    \bm{Q} = \bm{X}\bm{W}_Q,\ \bm{K} = \bm{X}\bm{W}_K,\ \bm{V}=\bm{X}\bm{W}_V,\notag
\end{align}
where $\bm{X}\in\mathbb{R}^{n\times d}$ and $\bm{W}_Q,\bm{W}_K,\bm{W}_V\in\mathbb{R}^{d\times d}$.

The vanilla procedure has two drawbacks in concentrating the information from $\bm{V}$. First, when computing the $\bm{Q}\bm{K}^{\top}$ part, full dense matrix multiplication is involved at a cost of $\mathcal{O}(n^2)$ vector multiplications. 
It can be prohibitive for long sequence problems. On the other hand, if we view the $\mathrm{softmax}$ operator as an approximation of the argmax counterpart, $\hat{\bm{V}}$ becomes a row selection from $\bm{V}$. From the view of matrix sketching, solely using either row or column selection could reduce the cost-effectiveness, compared to using both column and row selections.

\subsection{Skeleton Sketching Based Attention}

We propose a Skeleton Sketching induced Attention structure to address those issues. First, we modify the original Attention to build the column Attention as follows:
\begin{align}
     \hat{\bm{V}}_1 = \mathrm{softmax}\left(\frac{1}{\sqrt{d}}\bm{Q}\bm{K}^\top \bm{P}_1^\top \right)\bm{P}_1\bm{V},\notag
\end{align}
where $\bm{P}_1\in\mathbbm{R}^{s_1\times n}$ denotes the sampling matrix and $s_1$ is the number of columns sampled. Let $i_1<i_2<...<i_{s_1}$ be the indices of the randomly sampled columns. Let $\bm{P}_{1,ab}$ denote the element located at the $a$-th column and $b$-th row and we have $\bm{P}_{1,ab} = 1$ if $i_a = b$ and $0$ otherwise. By these constructions, we can reduce the computational cost to $\mathcal{O}(ns_1 d^2 + ns_1^2d)$.

Similarly, we build the row sampling matrix $\bm{P}_2\in\mathbbm{R}^{d\times s_2}$ indicating the locations of the $s_2$ sample rows. Compute the row Attention as:
\begin{align}
     \hat{\bm{V}}_2 =\bm{V}\bm{P}_2~\mathrm{softmax}\left(\frac{1}{\sqrt{n}}\bm{P}_2^\top \bm{K}^\top \bm{Q}\right).\notag
\end{align}
Finally, we apply the layer-norm on $\hat{\bm{V}}_1$ and $\hat{\bm{V}}_2$ and then add them together to generate the final output:
\begin{align}
    \hat{\bm{V}} =  \mathrm{layernorm}_1(\hat{\bm{V}}_1)+ \mathrm{layernorm}_2(\hat{\bm{V}}_2).\label{eq:cur_output}
\end{align}
The usage of layer norm is to balance the output scales of column and row Attentions. A similar trick has been used in \citep{zhu2021long}, where the layer norm is applied to resolve scale mismatches between the different Attention mechanisms.

Before going into the detailed analysis, we first introduce the incoherence parameter of a matrix, which is commonly used in many low-rank matrix applications. 
\begin{definition}[$\mu$-incoherence] \label{def:incoh}
Given a rank-$r$ matrix $\bm{X}\in\mathbb{R}^{n \times d}$. 
Let $\bm{X}=\bm{W}\bm{\Sigma}\bm{V}^\top$ be its compact singular value decomposition. $\bm{X}$ is $\mu$-incoherent if there exists a constant $\mu$ such that
\begin{equation*}
    \max_i \|\bm{e}_i^\top\bm{W}\|\leq\sqrt{\frac{\mu r}{n}} \qquad \textnormal{and} \qquad
    \max_i \|\bm{e}_i^\top\bm{V}\|\leq\sqrt{\frac{\mu r}{d}},
\end{equation*}
where $\bm{e}_i$ denotes the $i$-th canonical basis vector.
\end{definition}
The $\mu$-incoherence describes the 
correlation between the column/row spaces and the canonical basis vectors. The larger $\mu$ value implies a higher {\it overlapping}, which leads to a better chance of successful reconstruction from sparse row/column samples. We next use the following proposition to characterize the efficiency of sampling in both columns and rows. 
\begin{prop}
\label{lem:0}
Let $\bm{X}\in\mathbb{R}^{n\times d}$ be a rank-$r$, $\mu$-incoherent matrix. Without loss of generality, we assume $n\geq d$. Let $\bm{E}\in\mathbb{R}^{n\times d}$ be a noise matrix. By uniformly sampling  $\mathcal{O}(\mu r \log n)$ columns and rows from the noisy $\bm{X}+\bm{E}$, Skeleton approximation can construct a matrix $\hat{\bm{X}}$ such that, with probability at least $1-\mathcal{O}(n^{-2})$,  
\begin{align*}
    \|\bm{X} - \hat{\bm{X}}\| \le \mathcal{O}\left(\frac{ \|\bm{E}\|\sqrt{nd}}{\mu r \log n}\right).
\end{align*}
\end{prop}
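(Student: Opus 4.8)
The plan is to separate the claim into a probabilistic part and a deterministic part: (i) show that uniformly sampling $\mathcal{O}(\mu r\log n)$ rows and columns of a rank-$r$, $\mu$-incoherent matrix acts, with probability $1-\mathcal{O}(n^{-2})$, as a two-sided subspace embedding; and (ii) run a deterministic perturbation analysis of the skeleton (CUR) decomposition that turns the embedding constants together with the noise level $\|\bm{E}\|$ into the stated bound. Throughout, write the compact SVD $\bm{X}=\bm{W}\bm{\Sigma}\bm{V}^\top$, let $\bm{S}_r$ and $\bm{S}_c$ be the row- and column-selection matrices for the $m=\mathcal{O}(\mu r\log n)$ samples, and split the observed blocks as $\bm{C}=\bm{X}\bm{S}_c+\bm{E}\bm{S}_c=:\bm{C}_\star+\bm{C}_E$, $\bm{R}=\bm{S}_r^\top\bm{X}+\bm{S}_r^\top\bm{E}=:\bm{R}_\star+\bm{R}_E$, and core $\bm{M}=\bm{S}_r^\top(\bm{X}+\bm{E})\bm{S}_c=:\bm{M}_\star+\bm{M}_E$, so that $\hat{\bm{X}}=\bm{C}\bm{U}\bm{R}$ with $\bm{U}$ the (rank-$r$ truncated) pseudoinverse of $\bm{M}$. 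The elementary but essential observations are that $\bm{C}_E,\bm{R}_E,\bm{M}_E$ are submatrices of $\bm{E}$, hence each has spectral norm at most $\|\bm{E}\|$, and that $\bm{C}_\star,\bm{R}_\star$ are submatrices of $\bm{X}$, hence each has spectral norm at most $\|\bm{X}\|$.

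For step (i), I would apply a matrix Chernoff inequality to $\bm{W}^\top\bm{S}_r\bm{S}_r^\top\bm{W}$, which is a sum of $m$ independent rank-one PSD terms $(\bm{e}_j^\top\bm{W})^\top(\bm{e}_j^\top\bm{W})$ with common mean $\tfrac{1}{n}\bm{I}_r$ and, by $\mu$-incoherence, operator norm at most $\mu r/n$; for $m\geq C\mu r\log n$ this gives $\tfrac{1}{2}\tfrac{m}{n}\bm{I}_r\preceq\bm{W}^\top\bm{S}_r\bm{S}_r^\top\bm{W}\preceq\tfrac{3}{2}\tfrac{m}{n}\bm{I}_r$ and, symmetrically, $\tfrac{1}{2}\tfrac{m}{d}\bm{I}_r\preceq\bm{V}^\top\bm{S}_c\bm{S}_c^\top\bm{V}\preceq\tfrac{3}{2}\tfrac{m}{d}\bm{I}_r$, with overall failure probability $\mathcal{O}(n^{-2})$ (using $n\geq d$). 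On this event $\bm{S}_r^\top\bm{W}$ and $\bm{S}_c^\top\bm{V}$ have full column rank $r$, which forces $\bm{M}_\star=(\bm{S}_r^\top\bm{W})\bm{\Sigma}(\bm{S}_c^\top\bm{V})^\top$ to have exact rank $r$, yields the noiseless skeleton identity $\bm{X}=\bm{C}_\star\bm{M}_\star^\dagger\bm{R}_\star$, and gives
\begin{align*}
\sigma_r(\bm{M}_\star)\;\geq\;\sigma_r(\bm{S}_r^\top\bm{W})\,\sigma_r(\bm{X})\,\sigma_r(\bm{S}_c^\top\bm{V})\;\geq\;\tfrac{1}{2}\,\tfrac{m}{\sqrt{nd}}\,\sigma_r(\bm{X}),
\end{align*}
so that $\|\bm{M}_\star^\dagger\|\leq 2\sqrt{nd}/(m\,\sigma_r(\bm{X}))$.

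For step (ii), in the regime where $\|\bm{M}_E\|\leq\tfrac{1}{2}\sigma_r(\bm{M}_\star)$ (which up to constants is what makes the assertion non-vacuous), standard pseudoinverse perturbation bounds give $\|\bm{U}\|\leq 2/\sigma_r(\bm{M}_\star)$ and $\|\bm{U}-\bm{M}_\star^\dagger\|\leq C\|\bm{M}_\star^\dagger\|^2\|\bm{M}_E\|$. Subtracting the noiseless identity,
\begin{align*}
\hat{\bm{X}}-\bm{X}\;=\;\bm{C}_\star(\bm{U}-\bm{M}_\star^\dagger)\bm{R}_\star\;+\;\bm{C}_\star\bm{U}\bm{R}_E\;+\;\bm{C}_E\bm{U}\bm{R}_\star\;+\;\bm{C}_E\bm{U}\bm{R}_E,
\end{align*}
and I would bound the four terms using $\|\bm{C}_\star\|,\|\bm{R}_\star\|\leq\|\bm{X}\|$, $\|\bm{C}_E\|,\|\bm{R}_E\|\leq\|\bm{E}\|$, and the estimates for $\|\bm{U}\|$ and $\|\bm{U}-\bm{M}_\star^\dagger\|$ above. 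The two cross terms $\bm{C}_\star\bm{U}\bm{R}_E$ and $\bm{C}_E\bm{U}\bm{R}_\star$ are of order $\|\bm{X}\|\,\|\bm{U}\|\,\|\bm{E}\|=\mathcal{O}(\|\bm{E}\|\sqrt{nd}/m)$, which — treating the condition number $\kappa(\bm{X})=\|\bm{X}\|/\sigma_r(\bm{X})$ of the fixed signal as a constant absorbed into $\mathcal{O}(\cdot)$ — is exactly $\mathcal{O}(\|\bm{E}\|\sqrt{nd}/(\mu r\log n))$; the term $\bm{C}_E\bm{U}\bm{R}_E$ is smaller by a factor $\|\bm{E}\|/\|\bm{X}\|$.

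The genuine obstacle is the remaining term $\bm{C}_\star(\bm{U}-\bm{M}_\star^\dagger)\bm{R}_\star$, and relatedly ensuring the noisy core does not lose its effective rank: a crude estimate of this term via $\|\bm{C}_\star\|\,\|\bm{U}-\bm{M}_\star^\dagger\|\,\|\bm{R}_\star\|$ carries an extra factor of $\sqrt{nd}/m$ compared with the cross terms, so one must either exploit cancellation — using $\bm{X}=\bm{C}_\star\bm{M}_\star^\dagger\bm{R}_\star$ together with $\mathrm{range}(\bm{C}_\star)=\mathrm{range}(\bm{X})$ and $\mathrm{range}(\bm{R}_\star^\top)=\mathrm{range}(\bm{X}^\top)$ to see that $\bm{C}_\star(\bm{U}-\bm{M}_\star^\dagger)\bm{R}_\star$ is effectively a perturbation of a projector rather than of a large-norm object — or restrict the noise level so this term is provably subdominant. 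Step (i) is routine once incoherence is in hand, and it is exactly what produces both the $\log n$ in the sample count and the $\sqrt{nd}/m$ scaling in the bound. Alternatively, the whole statement follows as a corollary of the noisy skeleton/CUR perturbation results in the cited references, once one verifies that uniform sampling of $\mathcal{O}(\mu r\log n)$ rows and columns of a $\mu$-incoherent matrix meets their subspace-embedding hypotheses.
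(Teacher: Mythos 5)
Your overall architecture mirrors the paper's proof at a high level --- both split the claim into (a) exact skeleton recovery of the noiseless $\bm{X}$ from $\mathcal{O}(\mu r\log n)$ uniform row/column samples and (b) a deterministic perturbation bound converting $\|\bm{E}\|$ into the stated error --- but the paper disposes of both halves by citation (Theorem~1.1 of \citep{chiu2013sublinear} for (a) and Corollary~4.3 of \citep{hamm2021perturbations} for (b)), whereas you attempt to prove them. Your step (i) is a correct and standard re-derivation of the Chiu--Demanet guarantee via matrix Chernoff: it delivers the two-sided subspace embedding, the noiseless identity $\bm{X}=\bm{C}_\star\bm{M}_\star^\dagger\bm{R}_\star$, and the estimate $\|\bm{M}_\star^\dagger\|\le 2\sqrt{nd}/(m\,\sigma_r(\bm{X}))$, with the right failure probability and the right appearance of $\log n$ and $\sqrt{nd}/m$.

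The gap is in step (ii), and you have located it yourself: the term $\bm{C}_\star(\bm{U}-\bm{M}_\star^\dagger)\bm{R}_\star$ is never actually bounded. The crude estimate $\|\bm{C}_\star\|\,\|\bm{U}-\bm{M}_\star^\dagger\|\,\|\bm{R}_\star\|\le \mathcal{O}\left(\kappa(\bm{X})^2\,(nd/m^2)\,\|\bm{E}\|\right)$ overshoots the target by a factor of $\sqrt{nd}/m$, and you offer only two unexecuted escape routes (``exploit cancellation'' or ``restrict the noise level''). Controlling exactly this term --- how the pseudo-inverse of the core reacts to noise without paying $\|\bm{M}_\star^\dagger\|^2$ --- is the nontrivial content of CUR perturbation theory, and it is precisely what the paper outsources to \citep[Corollary~4.3]{hamm2021perturbations}. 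In addition, your cross-term bounds silently absorb the condition number $\kappa(\bm{X})=\|\bm{X}\|/\sigma_r(\bm{X})$ into the $\mathcal{O}(\cdot)$, whereas the proposition as stated carries no such dependence; if you pursue the self-contained route you must either track that factor explicitly or add it as a hypothesis. As written, the proposal is an honest and well-organized sketch whose probabilistic half is complete but whose deterministic half remains open; your own fallback --- verify that uniform sampling of an incoherent matrix meets the hypotheses of the cited noisy-CUR perturbation results and then invoke them --- is, in essence, the paper's entire proof.
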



A similar result, under a slightly different setting, can be found in \citep{cai2021rcur}. For the completeness of the paper, we provide the proof here.
\begin{proof}
We resolve the sampling strategy. We consider a clear rank-$r$ matrix $\bm{X}\in\mathbb{R}^{n\times d}$, i.e., no additive noise and the rank is exact. Without loss of generality, we assume $n\geq d$. 
Provided $\bm{X}$ is $\mu$-incoherent, by \citep[Theorem~1.1]{chiu2013sublinear}, Skeleton approximation recovers $\bm{X}$ exactly, i.e., 
\[
\bm{X}=\bm{C}\bm{U}\bm{R},
\]
with probability at least $1-\mathcal{O}(n^{-2})$
if we sample $\mathcal{O}(\mu r \log n)$ rows and columns uniformly to form the submatrices $\bm{C}$ and $\bm{R}$. 

Thirdly, we resolve the error bound estimation. For the noisy matrix $\bm{X}+\bm{E}$, we directly apply \citep[Corollary~4.3]{hamm2021perturbations}.
Thus, we have
\[
\|\bm{X}-\hat{\bm{C}}\hat{\bm{U}}\hat{\bm{R}}\| \leq \mathcal{O}\left(\sqrt{ \frac{nd}{l_C l_R}}\right) \|\bm{E}\|,
\]
where $\hat{\bm{C}}$ and $\hat{\bm{R}}$ are sampled from the noisy matrix, $\hat{\bm{U}}$ is the pseudo-inverse of $\hat{\bm{C}}$ and $\hat{\bm{R}}$'s intersection, and $l_C$ (resp.~$l_R$) is the number of columns (resp.~rows) being sampled in $\hat{\bm{C}}$ (resp.~$\hat{\bm{R}}$). 

Note that this error bound assumes good column/row sampling, i.e., the clear submatrices corresponding to $\hat{\bm{C}}$ and $\hat{\bm{R}}$ can recover $\bm{X}$ exactly. Therefore, by combining the above two results, we show the claim in Proposition~\ref{lem:0}.
\end{proof}

Several works (e.g., \citep{chiu2013sublinear,drineas2008relative}) have proposed explicit methods to construct $\hat{\bm{X}}$. Those methods require computing the pseudo-inverse, generally inefficient in deep learning settings. \citep{xiong2021nystromformer} uses an approximation of the pseudo-inverse in the symmetric matrix setting. It is still an open question whether the approximated pseudo-inverse also works for the general matrix in deep learning settings. On the other hand, in the transformer model, a good matrix approximation is not our primary goal, and we thus pursue a different way that only maintains sufficient information to pass through the network via \eqref{eq:cur_output}. 




\subsection{Smoother Component}

Based on the analysis of Skeleton Sketching, the matrix incoherence parameter $\mu$ plays a crucial role in determining the number of rows and columns to sample.  Decreasing in $\mu$ leads to a smaller sampling size. 
Furthermore, the $\mu$-incoherence condition implies that the ``energy'' of the matrix is evenly distributed over its entries, i.e.,  the matrix is ``smooth'' \citep{candes2009exact}. An illustration of the Smoother in Skeleton Attention part is shown in Figure~\ref{fig:last}.
We smooth the input token matrix to ensure the sampling in rows and columns containing more local and/or global information. Thus, sampling several rows and columns from the smoothed token matrix can be more effective than the samples from the original token matrix. In this subsection, we propose a novel smoother component to reduce the incoherence parameter without introducing excessive information loss.

\begin{figure}[h]
\centering
\includegraphics[width=0.4\textwidth]{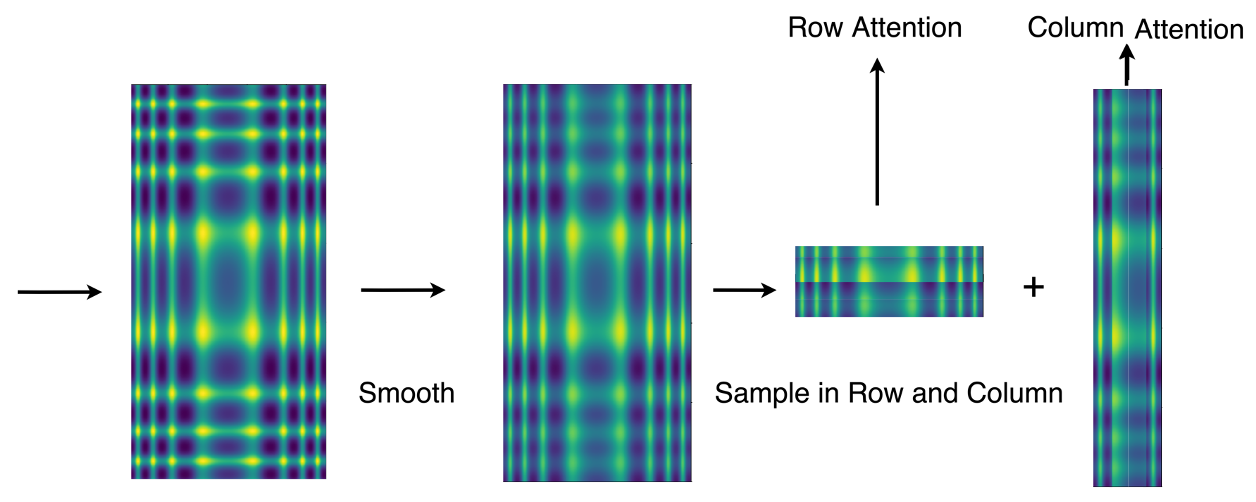}
\caption{Illustration on the effect of the Smoother in Skeleton Attention on Token Matrix.}
\label{fig:last}
\end{figure}

\vspace{0.05in}\noindent\textbf{Fourier Convolution.}~~ The incoherence parameter $\mu$ can be viewed as a measure of the smoothness of a matrix. A ``smoother'' matrix tends to have a smaller incoherence parameter. 
Intuitively, the adjacent columns or rows have similar values for a smooth matrix. Thus a few landmark columns or rows can represent the matrix with little error. On the other hand, if the matrix is harsh (e.g., containing spiky columns or rows), more landmarks are required. A common way to smooth a matrix is to convolute it with a smoothing kernel, such as a Gaussian kernel. However, directly using a fixed smoothing kernel can potentially remove too many details and harm the final performance. In the recent literature (e.g., \citep{guo2022visual}), large convolution kernel-based Attentions show a supreme performance in vision Transformers. In this paper, we propose to use a data-driven convolution layer along the sequence dimension with a kernel size equal to the sequence length. In this setting, the information of a given row could be decentralized among the rows. As the input token matrix is computed through a FeedForward layer, the information among different rows could be already processed. Hence, we do not perform the convolution along the hidden dimension. 

We use Fast Fourier Transformation (FFT) to implement the convolution. Let $\bm{L}_0\in\mathbbm{R}^{n\times  d}$ be the convolution kernel matrix. Via the convolution theorem,  the circular convolutions in the spatial domain are equivalent to pointwise products in the Fourier domain, and we then have:
\begin{align}
   \bm{X}^{\mathrm{smooth}} =  \bm{X}*\bm{L}_0 = \mathcal{F}^{-1}\left[\mathcal{F}(\bm{X})\cdot\mathcal{F}(\bm{L}_0) \right],\label{eq:fft_0}
\end{align}
where $\mathcal{F}$, $*$, and $\cdot$ denote FFT operator, convolution operator, and point-wise product, respectively.

Equation \eqref{eq:fft_0} requires $3d$ times faster Fourier operations which could be prohibited when facing large $d$. In order to save the computational cost, we use the learnable matrix $\bm{L}\in\mathbb{C}^{n\times d}$ in the frequency domain instead and apply segment-average (averaging segments of hidden dimension) to $\bm{X}$. In practice, we use the rFFT/irFFT, Fast (inverse) Fourier Transformation of real input instead of the general FFT/IFFT, and the size of the matrix $\bm{L}$ is reduced to $\bm{L}\in\mathbb{C}^{(\lfloor n/2\rfloor+1)\times d}$. To simplify the notation, we assume there are integers $s$ and $r$ with $d = sr$. Instead of using \eqref{eq:fft_0}, we apply the following \eqref{eq:fft_1} to smooth the token matrix. 
\begin{align}
     \bm{X}^{\mathrm{smooth}} = \mathcal{F}^{-1}\left[\mathcal{F}(\bm{X}\bm{S})\cdot\bm{L} \right],\label{eq:fft_1}
\end{align}
where 
\begin{align}
    \bm{S} = \begin{bmatrix}
    \frac{1}{s}\bm{1}&\bm{0}&...&\bm{0}\\
    \bm{0}&\frac{1}{s}\bm{1}&...&\bm{0}\\
    \vdots&\vdots&\ddots&\vdots\\
    \bm{0}&\bm{0}&\cdots&\frac{1}{s}\bm{1}
    \end{bmatrix}\in\mathbb{R}^{d\times d}\label{eq:s_DEF}
\end{align}
and $\bm{1}$ denotes the $s\times s$ matrix with all elements equal 1. As $\bm{X}\bm{S}$ contains repeated rows, in \eqref{eq:fft_1}, we can reduce the usage of faster Fourier operations to $r+d$ times. Moreover, to further control the computation cost, we only smooth the token matrix $\bm{X}$ instead of the query, key and value matrices $\bm{Q},\bm{K}$ and $\bm{V}$ separately.

In the following proposition, we show the smooth ability of the Fourier convolution.
\begin{prop}\label{lem:1}
Let $\{x_1,....,x_{n}\}$ be sequences with $\max_t |x_t|\le a_{\max}$ and $\max_{t}|x_{t}-x_{t-1}|\le b_{\max}$. Let $\{l_{1},...,l_{n}\}$ be sequences of \textit{i.i.d.}~$\frac{1}{n^2}\sigma^2$-subgaussian variables. Let $f(t)$ be the convolution of $\{x_t\}$ and $\{l_t\}$, i.e., $f(t) = \sum_{i=1}^{t}l_{t+1-i}x_i$. With probability at least $1-\delta$, we have:
\begin{align*}
    &\quad~|f(t)-f(t-1)|\cr
    &\le b_{\max}\sigma\sqrt{\frac{1}{2n}\log\left(\frac{2n}{\delta}\right)} + a_{\max}\sigma\sqrt{\frac{1}{2n^2}\log\left(\frac{2}{\delta}\right)}.
\end{align*}
\end{prop}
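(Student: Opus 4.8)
\emph{Proof proposal.} The plan is to reduce the statement to a one-line concentration bound for a weighted sum of independent subgaussian variables, after a reindexing that exposes the ``consecutive-difference'' structure of the convolution. First I would expand the difference directly from the definition. Writing $j=t+1-i$ turns $f(t)=\sum_{i=1}^{t}l_{t+1-i}x_i$ into $f(t)=\sum_{j=1}^{t}l_j x_{t+1-j}$, and similarly $f(t-1)=\sum_{j=1}^{t-1}l_j x_{t-j}$, so that
\begin{align*}
f(t)-f(t-1)=l_t x_1+\sum_{j=1}^{t-1}l_j\bigl(x_{t+1-j}-x_{t-j}\bigr).
\end{align*}
The whole point of this reindexing is that every coefficient multiplying a noise variable, except one, is a difference of two \emph{adjacent} signal entries, hence bounded by $b_{\max}$; only the single leftover term $l_t x_1$ carries the ``raw'' magnitude bound $|x_1|\le a_{\max}$. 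Note also that $l_1,\dots,l_t$ are distinct, hence independent, which is what lets me invoke a standard subgaussian tail bound.

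Next I would bound the two pieces separately. For $S:=\sum_{j=1}^{t-1}l_j(x_{t+1-j}-x_{t-j})$: each $l_j$ is $\tfrac{\sigma^2}{n^2}$-subgaussian, the coefficients satisfy $|x_{t+1-j}-x_{t-j}|\le b_{\max}$, and there are at most $n$ of them, so $S$ is subgaussian with variance proxy at most $\tfrac{\sigma^2 b_{\max}^2}{n^2}\cdot n=\tfrac{\sigma^2 b_{\max}^2}{n}$; the subgaussian tail inequality, solved for the deviation level at failure probability $\tfrac{\delta}{2}$ (or $\tfrac{\delta}{2n}$, if one wants the bound simultaneously over all $t\in\{1,\dots,n\}$, which is the source of the $\log(2n/\delta)$ factor), produces the first term on the right-hand side. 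For $l_t x_1$: this is a single subgaussian variable with variance proxy at most $\tfrac{\sigma^2 a_{\max}^2}{n^2}$, and its tail bound at failure probability $\tfrac{\delta}{2}$ produces the second term. A union bound over these two events together with $|f(t)-f(t-1)|\le|S|+|l_t x_1|$ finishes the proof.

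I do not expect a genuine obstacle here: the argument is a textbook concentration estimate. The only non-mechanical step is recognizing the reindexing that separates the $b_{\max}$-scale bulk (which concentrates at rate $1/\sqrt{n}$, the ``smoothing'' gain) from the lone $a_{\max}$-scale term (which is already of order $1/n$ on its own). The remaining care is bookkeeping: the exact numerical constants in the statement depend on which normalization of ``$v^2$-subgaussian'' one adopts — I would use $\Pr(|X|\ge u)\le 2\exp(-2u^2/v^2)$ — and on how the total failure budget $\delta$ is split between the two terms and, if desired, across the $n$ time indices.
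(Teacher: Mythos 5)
Your proposal is correct, and it reaches the same bound by the same overall strategy (split $f(t)-f(t-1)$ into a bulk sum plus one leftover term, apply a subgaussian/Hoeffding tail to each, union bound), but your decomposition is genuinely different from the paper's and is arguably the sounder of the two. The paper keeps the signal indices fixed and differences the noise, writing $f(t)-f(t-1)=\sum_{i=1}^{t-1}(l_{i+1}-l_i)x_i+l_1x_t$, and then applies Hoeffding directly to that sum with $b_{\max}^2$ in the exponent. As written this is problematic on two counts: the summands $(l_{i+1}-l_i)x_i$ are not independent (each $l_i$ appears in two consecutive terms), and the natural coefficient bound for that sum is $a_{\max}$, not $b_{\max}$ --- the $b_{\max}$ only emerges after an Abel summation that transfers the differences onto the $x$'s, a step the paper leaves implicit. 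Your reindexing $f(t)-f(t-1)=l_tx_1+\sum_{j=1}^{t-1}l_j(x_{t+1-j}-x_{t-j})$ performs exactly that transfer up front, so the bulk sum has genuinely independent summands with coefficients bounded by $b_{\max}$, and the standard subgaussian tail applies with no further argument. One bookkeeping caveat: the paper's leftover term is $l_1x_t$, a single noise variable for all $t$, so its bound $a_{\max}\sigma\sqrt{\tfrac{1}{2n^2}\log(\tfrac{2}{\delta})}$ holds uniformly in $t$ without a union bound; your leftover term is $l_tx_1$, which varies with $t$, so if you want the statement simultaneously for all $t$ (as the paper's final display $\max_t|f(t)-f(t-1)|$ does) your second term would also pick up a $\log(2n/\delta)$ factor. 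For the proposition as literally stated (a fixed $t$), your version is fine as is.
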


\begin{proof}

As $f(t)$ is the convolution function of $\{x_t\}$ and $\{l_t\}$, from the definition of convolution  for $t=1,2,...,n$ we have 
\begin{align}
f(t) & =\sum_{i=1}^{t}l_{t+1-i}x_i\notag
\end{align}
and
\begin{align}
     f(t)-f(t-1)&=\underbrace{\sum_{i=1}^{t-1}(l_{i+1}-l_i)x_i}_{:= (a_t)} + l_1x_{t}.\label{eq:lem:1:1}
\end{align}
By Hoffelding inequality, term $(a_t)$ satisfies the following inequality with $\varepsilon>0$.
\begin{align}
   \mathbbm{P}(|(a_t)|\ge \varepsilon) &= \mathbbm{P}\left( \left|\sum_{i=1}^{t-1}(l_{i+1}-l_i)x_i\right|\ge \varepsilon\right)\notag\\
   &\le \exp\left(-\frac{2\varepsilon^2}{(t-1)b_{\max}^2\cdot\frac{1}{n^2}\sigma^2}\right)\label{eq:lem:1:2}
\end{align}
Combine \eqref{eq:lem:1:2} with the union bound over $t=1,2,...,n$ and the following \eqref{eq:lem:1:2} holds with probability at least $1- \delta/2$: 
\begin{align}
    \max_t\left|(a_t)\right|\le b_{\max}\sigma\sqrt{\frac{1}{2n}\log\left(\frac{2n}{\delta}\right)}\label{eq:lem:1:3}
\end{align}

Similarly, with probability $1-\delta/2$, we have 
\begin{align}
    \max_{t}|l_1x_{t}|\le a_{\max}\sigma\sqrt{\frac{1}{2n^2}\log\left(\frac{2}{\delta}\right)}\label{eq:lem:1:4}.
\end{align}
Therefore, with probability at least $1-\delta$, \eqref{eq:lem:1:3} and \eqref{eq:lem:1:4} give
\begin{align*}
    \max_{t}&|f(t)-f(t-1)|\cr
    &\le b_{\max}\sigma\sqrt{\frac{1}{2n}\log\left(\frac{2n}{\delta}\right)} + a_{\max}\sigma\sqrt{\frac{1}{2n^2}\log\left(\frac{2}{\delta}\right)}.
\end{align*}
This finishes the proof.
\end{proof}
The Proposition~\ref{lem:1} can be used to describe the Fourier convolution layer's behavior in the early training stage. Via some standard initialization methods (e.g., Kaiming initialization or Xavier initialization), the variance of elements in learnable matrix $\bm{L}$ is $\mathcal{O}(n^{-1})$ and the scale of elements is $\mathcal{O}(n^{-1/2})$.\footnote{Here we omit the dependence in $d$ for brevity.} To simplify our discussion, let us assume we use Kaiming normal initialization and $\bm{L}$ becomes a random complex Gaussian matrix with zero mean and variance $n^{-1}\sigma^2$. Using the fact that the FFT of Gaussian s remains Gaussian with $2n$ times larger variance, the $n^{-1}\sigma^2$ variance Gaussian sequences through inverse FFT (IFFT) would result in Gaussian sequences with $\frac{1}{2n^2}\sigma^2$ variance. By Proposition~\ref{lem:1}, the maximum difference between adjacent elements after the convolution is scaled on $b_{\max}\sigma n^{-1/2}+a_{\max}\sigma n^{-1}\approx b_{\max}\sigma n^{-1/2}$ when sequence length $n$ is large enough. Thus as long as  $\sigma< \mathcal{O}(\sqrt{n})$, the sequences are smoothed by the Fourier convolution.  This smoothness ability eventually helps stabilize the early training stage. The related experiments are summarized in Section~\ref{appendix:M}.

When the training procedure progresses, the learnable matrix $\bm{L}$ will be apart from the Sub-Gaussian initialization and converge to some data-driven matrix. In this regime, it is possible that the convoluted sequences may not necessarily preserve a very low variance as it is in the early stage. However, a good memorization ability can happen. In particular, in Proposition~\ref{lem:2}, we show it is possible to have a learned matrix $\bm{L}$ such that the historical information in $\bm{X}\bm{S}$ up to time $t$ can be compressed to some function parameterized with $t$-th token in the convoluted sequences.


\begin{prop}\label{lem:2}
Let $\bm{X}\in\mathbbm{R}^{n\times d}$ be a bounded matrix and $\bm{S}\in\mathbbm{R}^{d\times d}$ constructed by \eqref{eq:s_DEF}. There exist matrices $\bm{G},\bm{L}\in\mathbbm{R}^{n\times d}$ such that
\begin{align*}
   \left\|(\bm{X}\bm{S})_{1:t} - \bm{X}^{\mathrm{smooth}}_{t}\bm{G}_{1:t}\right\|\le \mathcal{O}\left(r^{3/2}t\log(n) {d}^{-1/2}\right), 
\end{align*}
where $(\cdot)_{1:t}$ is the submatrix of the first $t$ rows of a given matrix, $\bm{X}^{\mathrm{smooth}}_{t}$ is the $t$-th row of $\bm{X}^{\mathrm{smooth}}= \mathcal{F}^{-1}\left[\mathcal{F}(\bm{X}\bm{S})\cdot\bm{L} \right]$, and $\bm{G}$ satisfies $\bm{G}_{i,j}=\bm{G}_{i+s,j}=....=\bm{G}_{i+r(s-1),j} = g_{i}(j)$. Here $\{g_1(\cdot),...,g_s(\cdot)\}$ is an orthogonal polynomial basis. 
\end{prop}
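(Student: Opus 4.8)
The plan is to exploit the structure of the segment-averaging matrix $\bm{S}$ together with the freedom to choose the frequency-domain kernel $\bm{L}$ so that the smoothed row $\bm{X}^{\mathrm{smooth}}_t$ acts as a compressed summary of the history $(\bm{X}\bm{S})_{1:t}$. First I would unfold the convolution: since $\bm{X}^{\mathrm{smooth}} = \mathcal{F}^{-1}[\mathcal{F}(\bm{X}\bm{S})\cdot\bm{L}]$ is a circular convolution of the rows of $\bm{X}\bm{S}$ against the (inverse-transformed) kernel, the $t$-th row $\bm{X}^{\mathrm{smooth}}_t$ is a fixed linear combination $\sum_{i} c_{t,i}\,(\bm{X}\bm{S})_i$ of the input rows, where the coefficients $c_{t,i}$ are determined by $\bm{L}$. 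Because $\bm{X}\bm{S}$ has the block-repeated column structure coming from \eqref{eq:s_DEF}, each row of $\bm{X}\bm{S}$ is really an $r$-dimensional vector replicated $s$ times; this is what allows the target $\bm{G}$ to have the stated periodicity $\bm{G}_{i,j}=\bm{G}_{i+s,j}=\cdots$ encoding an orthogonal polynomial basis $\{g_1,\dots,g_s\}$ evaluated at the history indices.

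The core of the argument is the HiPPO-type observation (cf.~\citep{gu2020hippo}): projecting a length-$t$ signal onto the first $r$ elements of a suitable orthogonal polynomial basis incurs an $L^2$ truncation error that decays in the number of retained coefficients, and this projection can be realized by a single linear map applied to a fixed-dimensional state. Concretely, I would (i) fix $\bm{G}$ so that $\bm{X}^{\mathrm{smooth}}_t \bm{G}_{1:t}$ reconstructs the degree-$<r$ polynomial approximation of the columns of $(\bm{X}\bm{S})_{1:t}$ against the basis $\{g_1,\dots,g_s\}$, and (ii) choose $\bm{L}$ (equivalently the convolution coefficients $c_{t,i}$) to be exactly the inner products $\langle (\bm{X}\bm{S})_{\cdot}, g_k\rangle$ that produce those polynomial coefficients in the row $\bm{X}^{\mathrm{smooth}}_t$. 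The residual $\|(\bm{X}\bm{S})_{1:t} - \bm{X}^{\mathrm{smooth}}_t \bm{G}_{1:t}\|$ then equals the polynomial-approximation error of a bounded signal of length $t$ truncated at order $r$, summed over the $r$ effective columns — and bounding each per-entry error by $\mathcal{O}(r^{1/2}\log(n)/\sqrt{d})$ (using $d=sr$, boundedness of $\bm{X}$, and standard bounds on Legendre-type basis evaluations which carry the $\log n$ factor over the $n$ grid points) and accumulating over $t$ rows and $r$ columns gives the claimed $\mathcal{O}(r^{3/2} t \log(n) d^{-1/2})$.

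The main obstacle I anticipate is the compatibility between the \emph{circular} convolution that the rFFT/irFFT implements and the \emph{causal}, time-truncated projection that the HiPPO-style memory requires: the kernel $\bm{L}$ produces coefficients $c_{t,i}$ for all $i$, including $i>t$ (wrap-around terms), whereas the polynomial projection of the history only wants $i\le t$. Handling this requires either arguing those wrap-around contributions are absorbed into the error term or choosing $\bm{L}$ so the effective coefficients vanish (or are negligible) for $i>t$ uniformly in $t$ — which is delicate because a single kernel must work simultaneously for every $t$. A secondary technical point is tracking how the segment-average $\bm{S}$ interacts with the normalization: the $\tfrac{1}{s}\bm{1}$ blocks shrink scales by $1/s$, and one must verify this is consistent with the $d^{-1/2}$ in the bound rather than hiding an extra factor of $s$ or $r$. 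Once those two bookkeeping issues are pinned down, the rest is a routine combination of the orthogonal-polynomial approximation estimate with the triangle inequality over rows and columns.
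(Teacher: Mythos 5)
Your proposal follows essentially the same route as the paper's proof: view each segment-averaged column of $\bm{X}\bm{S}$ as a function of the row index, approximate its history by a HiPPO-style projection onto the orthogonal polynomial basis $\{g_1,\dots,g_s\}$ (with the coefficient recursion realized as a convolution whose kernel becomes $\bm{L}$ in the frequency domain), and accumulate the polynomial-truncation and discretization errors over the $r$ segments to obtain the $\mathcal{O}\left(r^{3/2}t\log(n)d^{-1/2}\right)$ bound. The causal-versus-circular convolution obstacle you flag is a genuine subtlety, but the paper's own proof also passes from the causal sum $\sum_{i=1}^{t-1}\bm{b}\bm{A}^{t-i}h(i)$ to $\mathcal{F}^{-1}\left(\mathcal{F}(\bm{L}_0)\cdot\mathcal{F}(\bm{X}\bm{S})\right)$ without addressing wrap-around, so this is not a gap of yours relative to the paper.
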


\begin{proof}

The proof contains two parts. In the first part, we view the data sequence as a function of index $t$ and  construct the coefficients and orthogonal polynomials for function approximation. In the second part, we show such coefficients can be computed with Fourier convolution, i.e., \eqref{eq:fft_1}).

\noindent{\it Function Approximation.}~~ We reformulate the matrix $\bm{X}\bm{S}$ as follow:
\begin{align}
   \bm{X}\bm{S} = \begin{bmatrix}
    \bar{\bm{x}}_1\bm{e}&\bar{\bm{x}}_2\bm{e}&\cdots&\bar{\bm{x}}_r\bm{e}\notag
    \end{bmatrix},
\end{align}
where $\bm{e}\in\mathbb{R}^{1\times s}$ is the one vector and $\bar{\bm{x}}_i\in\mathbb{R}^{n\times 1}$ is the average of $(s(i-1)+1)$-th to $(si)$-th columns of $\bm{X}$.

Next, we focus on vector $\bar{\bm{x}}_j$ and view its $t$-th element as the output of a function $h^j(t) =\bar{\bm{x}}_{jt}$. Via analysis in \citep[Appendixes~C and D]{gu2020hippo}, we can form an approximation on $h^j(t)$ as follow:
\begin{align}
    h_{[x\le t]}^j(x) \approx \sum_{i=1}c_i^j(t) g_i(x),
\end{align}
where $\{g_i\}$ is a sequence of orthogonal polynomial and $[c_1^j(t),c_2^j(t),...,c_s^j(t)] := \bm{c}_t^j\in\mathbbm{R}^{1\times s}$ satisfy
\begin{align}
    \frac{d}{dt}\bm{c}(t)^{j} = \frac{1}{t}\bm{c}(t)^{j}\bm{A}_0+ \frac{1}{ts\log n }h(t)\bm{b}_0\label{eq:lem:2:1}
\end{align}
where $\bm{A}_0\in\mathbbm{R}^{s\times s}$ and $\bm{b}_0\in\mathbbm{R}^{1\times s}$ are predefined matrix and vector respectively. Equation \eqref{eq:lem:2:1} is corresponding to the case with $\lambda_n = s\log n$ in \citep{gu2020hippo}. 

We then use Forward Euler approach to discretize it: 
\begin{align}
    \hat{\bm{c}}(t)^j = \hat{\bm{c}}(t-1)^j(\frac{1}{t}\bm{I} +\frac{1}{t}\bm{A}_0)  + \frac{1}{ts\log n}h(t)\bm{b}_0,\label{eq:1}.
\end{align}
The standard error analysis of Forward Euler approach gives
\begin{align}
   &\bm{c}(t+1)^j \notag\\
   =& \bm{c}(t)^j + \frac{1}{t}\bm{c}(t)^j\bm{A}_0 + \frac{1}{ts\log n}h(t)\bm{b}_0 + \frac{d^2}{dt^2}\bm{c}(t)^j|_{t = \xi}
   \notag\\
   =&\bm{c}(t)^j + \frac{1}{t}\bm{c}(t)^j\bm{A}_0 + \frac{1}{ts\log n}h(t)\bm{b}_0 + \frac{1}{\xi  s\log n}h(\xi)^{\prime}\bm{b}_0\notag\\
   =&\bm{c}(t)^j + \frac{1}{t}\bm{c}(t)^j\bm{A}_0 + \frac{1}{ts\log n}h(t)\bm{b}_0 + \mathcal{O}\left(\frac{1}{ts\log n}\right),\notag
\end{align}
where $\xi\in[t,t+1]$. It implies that for $t =1,2,...,n$,
\begin{align}
    \|\hat{\bm{c}}(t)^j - \bm{c}(t)^j\|\le \mathcal{O}\left(\frac{\log t}{s\log n}\right).\label{eq:euler_error}
\end{align}                                                                                                                                                 
Combine \eqref{eq:euler_error} with Proposition~6 in \citep{gu2020hippo}, if $h^{j}(x)$ is quadratic spline interpolation on $\{\bar{\bm{x}}_{jt}\}$, we obtain\footnote{If $h^{j}$ is $(k+1)$-th order spline interpolation on $\{\bar{\bm{x}}_{jt}\}$, then$    \|\bar{\bm{x}}_{jt} - \sum_{i=1}^s\hat{\bm{c}}_{i}(t)g_i(x)\|\le\mathcal{O}\left(t^{k}\log^k n \left(\frac{r}{d}\right)^{-k+1/2}\right)$. },
\begin{align}
    \|\bar{\bm{x}}_{jt} - \sum_{i=1}^s\hat{\bm{c}}_{i}(t)g_i(x)\|
    &\le \mathcal{O}\left(t\log n/\sqrt{s}\right) \cr
    &=  \mathcal{O}\left(t\log n\sqrt{\frac{r}{d}}\right).
    \label{eq:lem:2:2}
\end{align}
The desirable result in Proposition~\ref{lem:2} is obtained by repeatedly using \eqref{eq:lem:2:2} with $j=1,2,...,r$.\\

\noindent{\it Coefficients via Fourier Convolution.}~~ The remaining task is to show that $\{\hat{\bm{c}}(t)^{j}\}$ can be generated via Fourier convolution. To simplify the notation, we denote $\bm{A} = \frac{1}{t}\bm{I}+\frac{1}{t}\bm{A}_0$ and $\bm{b} = \frac{1}{t\log n}\bm{b}_0$ and \eqref{eq:1} becomes
\begin{align}
    \hat{\bm{c}}(t)^j =  \hat{\bm{c}}(t-1)^j\bm{A} + h(t)\bm{b}\label{eq:2}.
\end{align}
We then repeatedly use \eqref{eq:2} with $t = 1,2,...$, and one can see
\begin{align}
    &\hat{\bm{c}}_t^j = \sum_{i=1}^{t-1}\bm{b}\bm{A}^{t-i}h(i) = \sum_{i=1}^{t-1}\bm{b}\bm{A}^{t-i}\bar{\bm{x}}_{ji}\notag\\
    \Longrightarrow\quad& \bm{C}^{j} = \bar{\bm{A}}_j * (\bar{\bm{x}}_j\bm{e}),\label{eq:conv:1}
\end{align}
where 
\begin{align}
    \bm{C}^{j} = \begin{bmatrix}
    \hat{\bm{c}}_1^j\\
    \hat{\bm{c}}_2^j\\
    \vdots\\
    \hat{\bm{c}}_n^j
    \end{bmatrix} \in\mathbb{R}^{n\times s},\ \textnormal{ and }\ \bar{\bm{A}}_j = \begin{bmatrix}
    \bm{b}\\
    \bm{b}\bm{A}\\
    \vdots\\
    \bm{b}\bm{A}^{n-1}
    \end{bmatrix}  \in\mathbb{R}^{n\times s}.
\end{align}

Next we repeatedly use \eqref{eq:conv:1} from $j=1,2,..,r$, and have
\begin{align}
   &\underbrace{ \begin{bmatrix}
    \bm{C}^1
    &\bm{C}^2
    &\cdots
    &\bm{C}^r
    \end{bmatrix}}_{:=\bm{X}^{\mathrm{smooth}}}\notag\\
    =& 
   \underbrace{\begin{bmatrix}
    \bar{\bm{A}}_1
    &\bar{\bm{A}}_2
    &\cdots
    &\bar{\bm{A}}_r
    \end{bmatrix}}_{:=\bm{L}_0} * 
  \underbrace{\begin{bmatrix}
   \bar{\bm{x}}_1 \bm{e}
    &\bar{\bm{x}}_2 \bm{e}
    &\cdots
    &\bar{\bm{x}}_r \bm{e}
    \end{bmatrix}}_{=\bm{X}\bm{S}}\notag\\
    \Longrightarrow\quad & \bm{X}^{\mathrm{smooth}} = \bm{L}_0 * \bm{X}\bm{S}\notag\\
    \Longrightarrow\quad & \bm{X}^{\mathrm{smooth}} = \mathcal{F}^{-1}\left(\mathcal{F}(\bm{L}_0) \cdot \mathcal{F}(\bm{X}\bm{S})\right)\notag\\
    \Longrightarrow\quad & \bm{X}^{\mathrm{smooth}} = \mathcal{F}^{-1}\left(\bm{L} \cdot \mathcal{F}(\bm{X}\bm{S})\right),\notag
\end{align}
where  we use the fact that $\bm{L}$ is constructed in frequency domain in Fourier convolution in Eq. \eqref{eq:fft_1}.
\end{proof}

The Proposition~\ref{lem:2} states that if we properly train the matrix $\bm{L}$, the information in $\bm{X}\bm{S}$ up to row $t$ can be compressed into $t$-th row of $\bm{X}^{\mathrm{smooth}}$ with a moderate tolerance. Therefore, when we sample in rows $\bm{X}^{\mathrm{smooth}}$, they will contain more information than the same number of rows in the original $\bm{X}\bm{S}$. Similar results are also discussed in FNet \citep{lee2021fnet} and several RNN literature, such as \citep{gu2020hippo} and \citep{voelker2019legendre}. In  \citep{gu2020hippo},  several specific types of polynomials (e.g., Legendre or Chebyshev) are explored, and the corresponding matrix $\bm{L}$ is predefined instead of data-driven. Recently, \citep{gu2021efficiently} propose a sophisticated method that can be used to compute $\bm{X}^{\mathrm{smooth}}$. We leave it for future work.

\vspace{0.05in}\noindent\textbf{Convolution Stem.}~~ $\bm{X}^{\mathrm{smooth}}$ may encounter an over-smoothing situation that local details can be wiped out. We use a convolution stem (CNNs + BN + ReLU) to tackle this problem. We first concatenate $\bm{X}^{\mathrm{smooth}}$ with the original token matrix $\bm{X}$ into a $n\times 2d$ matrix and then apply a 1D convolution with kernel size 3 to transform it back to $n\times d$ dimensions. At last, the output is normalized with the Batchnorm layer and truncated by the ReLU activation function to stabilize the training procedure. \citep{wang2021scaled} report the ReLU activation coupled with the normalization layer plays an important role in various vision transformers and analyzes this phenomenon theoretically.

\section{Experiments} \label{sec:experiment}
In this section, we test our S$^3$Attention on Long Range Arena (LRA) datasets \citep{tay2021long} and six real-world time series benchmark datasets for long-term forecasting. We also evaluate the transfer learning ability of S$^3$Attention on GLUE tasks. In recent literature (e.g., \citep{gu2020hippo,gu2021efficiently,gupta2022diagonal}), the RNN type model is also widely discussed for long sequence tasks. We don't include them as benchmark models since in this paper we focus on improving the Attention structure. The testing environment contains 12 Intel(R) Xeon(R) Platinum 8163 CPU @ 2.50GHz CPUs, 1 TESLA V100 SXM2 32G, and 90 GB memory. We implement the S$^3$Attention based on the official codes of \citep{zhu2021long} and \citep{FedFormer} for LRA and time-series forecasting tasks respectively. The implementation details for S$^3$Attention are provided in the supplementary material and the code is available online at \url{https://github.com/wxie9/S3Attention}.

\subsection {Long-Range Arena}
The open-source Long-Range Arena (LRA) benchmark \citep{tay2021long} is originally proposed as a standard way to test the capabilities of Attention variants architectures on long sequence tasks.

 We benchmark our model with several recent state-of-art efficient Attention architectures, including Sparse Transformer {\citep{child2019generating}}, Longformer  {\citep{beltagy2020longformer}}, Linformer {\citep{wang2020linformer}}, Reformer {\citep{kitaev2020reformer}}, Sinkhorn Transformer {\citep{tay2020sparse}}, Synthesizer {\citep{tay2005synthesizer}}, BigBird  {\citep{zaheer2020big}},  Linear Transformers {\citep{katharopoulos2020transformers}}, Performer {\citep{choromanski2020rethinking}},  H-Transformer-1D {\citep{zhu2021h}}, Nystr{\"o}mformer {\citep{xiong2021nystromformer}}, Transformer-LS {\citep{zhu2021long}}, FNet {\citep{lee2021fnet}}, Luna {\citep{ma2021luna}}, FMMformer {\citep{nguyen2021fmmformer}},  Cosformer {\citep{qin2021cosformer}} and Scatterbrain {\citep{chen2021scatterbrain}}. S$^3$Attention achieves the highest 66.08\% average accuracy with tuned parameters and the second best 64.11\% result with fixed parameters as shown in Table~\ref{tab:lra-benchmarks}.

In particular, S$^3$Attention significantly outperforms the benchmarks on Image tasks by relatively large margins (12.6\% and 20.6\%, respectively), which support S$^3$Attention's smoothness effect on the low-level features and will benefit the high-level image classification tasks.

Moreover, we highlight the sampling efficiency of S$^3$Attention. The sequence length of LRA tasks is over one thousand. The efficient Transformers in literature usually can not project the token matrix to a very small size while maintaining comparable numerical performance, by only sampling 8 rows and columns from the token matrix, S$^3$Attention has already obtained 64.11\% average score improving the previous best 62.03\% score of Transformer-LS.
\begin{table*}[tbp]
\centering
\caption{Experimental results on Long-Range Arena benchmark. The best model is in boldface and the second best is underlined. The standard deviations of the S$^3$Attention are reported in parentheses.}
\scalebox{1}{
\begin{tabular}{l|ccccc|c}
\toprule
Model& ListOps& Text & Retrieval & Image & Pathfinder& Average\\
\midrule
Transformer          &{36.37}    &{64.27}     &{57.46}      &{42.44}      &{71.40}|   &{54.39}\\

Local Attention      &15.82    &52.98     &53.39      &41.46      &66.63   &46.06\\
Sparse Transformer   &17.07    &63.58     &59.59      &44.24      &71.71   &51.24\\
Longformer           &35.63    &62.85     &56.89      &42.22      &69.71   &53.46\\
Linformer            &35.70    &53.94     &52.27      &38.56      &76.34   &51.36\\
Reformer             &37.27    &56.10     &53.40      &38.07      &68.50   &50.67\\
Sinkhorn Transformer &33.67    &61.20     &53.83      &41.23      &67.45   &51.39\\
Synthesizer          &36.99    &61.68     &54.67      &41.61      &69.45   &52.88\\
BigBird              &36.05    &64.02     &59.29      &40.83      &74.87   &55.01\\
Linear Transformer   &16.13    &65.90     &53.09      &42.34      &75.30   &50.55\\
Performer            &18.01    &65.40     &53.82      &42.77      &77.05   &51.41\\
Nystromformer        &37.34    &65.75     &81.29      &41.58      &70.94   &59.38\\
H-Transformer-1D     &{\bf49.53}    &{\bf 78.69}     &63.99      &46.05      &68.78   &61.41\\
Transformer-LS       &38.36    &68.40     &81.85      &45.05      &76.48   &62.03\\
FNet                 &35.33    &65.11     &59.61      &38.67      &77.08   &54.42\\
Luna                 &38.01    &65.78     &79.56      &47.86      &{\bf 78.89}   &62.02\\
FMMformer            &36.74    &67.84     &81.88      &45.10      &72.12   &60.74\\
PoNet                &38.80    &69.82     &80.35      &46.88      &70.39   &61.05\\
Cosformer &37.9&63.41&61.36&43.17&70.33&55.23\\
Scatterbrain &38.6&64.55&80.22&43.65&69.91&59.38\\
\midrule
S$^3$Attention ($r,s_1,s_2= 8$) & 38.30(0.40)&69.27(0.83)&\underline{83.26(0.45)}  &\underline{53.90(1.54)} &75.82(0.97)&\underline{64.11(2.07)}\\ 
S$^3$Attention (best)  & \underline{39.15(0.48)}& \underline{71.58(0.95)}&{\bf83.73(0.61)}  &{\bf57.73(1.83)} &\underline{78.20(1.32)}&{\bf66.08(2.56)}\\ 
\bottomrule
\end{tabular}
}
\label{tab:lra-benchmarks}
\end{table*}
\subsection{Long-Term Forecasting Tasks for Time Series }\label{sec:forecasting}
To further evaluate the proposed S$^3$Attention, we also conduct extensive experiments on six popular real-world benchmark datasets for long-term time series forecasting, including traffic, energy, economics, weather, and disease as shown in Table \ref{tab:multi-benchmarks}

To highlight the relevant comparison, we include five state-of-the-art (SOTA) Attention-based models, i.e., FEDformer~\citep{FedFormer}, Autoformer~\citep{Autoformer}, Informer~\citep{haoyietal-informer-2021}, LogTrans~\citep{Log-transformer-shiyang-2019}, and Reformer~\citep{kitaev2020reformer} for comparison. FEDformer is selected as the main baseline as it achieves SOTA results in most settings. More details about baseline models, datasets, and implementations are described in the supplementary material. 

Compared with SOTA work (i.e., FEDformer), the proposed S$^3$Attention yields a comparable performance in those tasks, with {3/6 datasets having larger winning counts in MSE/MAE}. It is worth noting that the improvement is even more significant on certain datasets, e.g., Exchange ($>30\%$ reduction in MSE and MAE). Although Exchange does not exhibit an apparent periodicity pattern, S$^3$Attention still achieves superior performance.  

\begin{table*}[tbp]
\centering
\caption{Multivariate long-term series forecasting results on six datasets with input length of $96$ and prediction length $O \in \{96,192,336,720\}$ (For ILI dataset, we set prediction length $O \in \{24,36,48,60\}$) with input length {$36$}. A lower MSE indicates better performance. All experiments are repeated 5 times.}
\scalebox{1}{
\begin{tabular}{c|c|cccccccccccccccccc}
\toprule
\multicolumn{2}{c|}{Methods}&\multicolumn{2}{c|}{S$^3$Attention}&\multicolumn{2}{c|}{FEDformer}&\multicolumn{2}{c|}{Autoformer}&\multicolumn{2}{c|}{Informer}&\multicolumn{2}{c|}{LogTrans}&\multicolumn{2}{c}{Reformer}\\
\midrule
\multicolumn{2}{c|}{Metric} & MSE  & MAE & MSE & MAE& MSE  & MAE& MSE  & MAE & MSE  & MAE& MSE  & MAE\\
\midrule
\multirow{4}{*}{\rotatebox{90}{ETTm2}} &96 & \textbf{0.192} & \textbf{0.283} &0.203 &0.287 &0.255  &0.339 &0.705 &0.690   &0.768  &0.642  &0.658  &0.619    \\
                        & 192 & \textbf{0.255} & \textbf{0.324} &0.269  &0.328  &0.281 &0.340 &0.924 &0.692   &0.989  &0.757  &1.078  &0.827    \\
                        & 336 & \textbf{0.324} & \textbf{0.364} & 0.325 &0.366 &0.339  &0.372 &1.364 &0.877  &1.334  &0.872  &1.549  &0.972     \\
                        & 720 & 0.431 & 0.433 &\textbf{0.421} &\textbf{0.415} &0.422  &0.419 &0.877 &1.074  & 3.048 &1.328  &2.631  &1.242      \\
\midrule
\multirow{4}{*}{\rotatebox{90}{Electricity}} &96  &0.218  &0.332  &\textbf{0.183} &\textbf{0.297} &0.201  &0.317 &0.304 &0.405   &0.258  &0.357  &0.312  &0.402    \\
                        & 192 &0.259 & 0.361 & \textbf{0.195} &\textbf{0.308} &0.222  &0.334 &0.313 &0.413   &0.266 &0.368  &0.348  &0.433    \\
                        & 336 & 0.267 & 0.367 &\textbf{0.212} &\textbf{0.313} &0.231 &0.338 &0.290 &0.381   &0.280 &0.380  &0.350  & 0.433    \\
                        & 720 & 0.293 & 0.385&\textbf{0.231} &\textbf{0.343} &0.254  &0.361 &0.262 &0.344  &0.283  &0.376  &0.340  &0.420     \\
\midrule
\multirow{4}{*}{\rotatebox{90}{Exchange}} &96  & \textbf{0.086} & \textbf{0.204} &0.139 &0.276 &0.197  &0.323 &1.292 &0.849   &0.968  &0.812  &1.065  &0.829    \\
                        & 192 & \textbf{0.188} & \textbf{0.292} &0.256 &0.369 &0.300  &0.369 &1.631 &0.968  &1.040  &0.851  &1.188  & 0.906   \\
                        & 336 & \textbf{0.356} & \textbf{0.433} &0.426 &0.464 &0.509  &0.524 &2.225 &1.145   &1.659  &1.081  &1.357  &0.976     \\
                        & 720 & \textbf{0.727} & \textbf{0.669} &1.090 &0.800 &1.447  &0.941 &2.521 &1.245   &1.941  &1.127  &1.510  &1.016     \\
\midrule
\multirow{4}{*}{\rotatebox{90}{Traffic}} &96  &0.592& 0.352 &\textbf{0.562} &\textbf{0.349} &0.613  &0.388 &0.824 &0.514  &0.684  &0.384  &0.732  &0.423    \\
                        & 192 & 0.583 & 0.343 &\textbf{0.562} &\textbf{0.346} &0.616&0.382 &1.106 &0.672   &0.685  &0.390  &0.733  &0.420    \\
                        & 336 & 0.598 & 0.346 &\textbf{0.570} &\textbf{0.323} &0.622  &0.337 &1.084 &0.627   &0.733  &0.408  &0.742  &0.420     \\
                        & 720 & 0.641 &0.397 &\textbf{0.596} &\textbf{0.368} &0.660  &0.408 &1.536 &0.845   &0.717  &0.396  &0.755  &0.423     \\
\midrule
\multirow{4}{*}{\rotatebox{90}{Weather}} & 96 & \textbf{0.182} & \textbf{0.262} &0.217  &0.296  &0.266  &0.336 &0.406 &0.444   &0.458  &0.490  &0.689  &0.596    \\
                        & 192 & \textbf{0.228} & \textbf{0.306} &0.276  &0.336  &0.307  &0.367 &0.525 &0.527   &0.658  &0.589  &0.752  &0.638    \\
                        & 336 &\textbf{0.295} &\textbf{0.355} & 0.339  &0.380  &0.359  &0.395 &0.531 &0.539  &0.797  &0.652  &0.639  &0.596    \\
                        & 720 &\textbf{0.383} & \textbf{0.418} &0.403  &0.428 &0.578 &0.578 &0.419  &0.428    &0.869  &0.675  &1.130  &0.792    \\
\midrule
\multirow{4}{*}{\rotatebox{90}{ILI}} & 24 & {2.431}  &{0.997} &\textbf{2.203}  &\textbf{0.963}  &3.483 &1.287 &4.631 &1.484   &4.480  &1.444  &4.400 &1.382    \\
                        & 36 & {2.287}& \textbf{0.972} &\textbf{2.272}  &0.976  &3.103  &1.148 &4.123 &1.348   &4.799  &1.467  &4.783  &1.448    \\
                        & 48 & {2.418} & {1.002} &\textbf{2.209}  &\textbf{0.981}  &2.669  &1.085 &4.066 &1.36   &4.800  &1.468  &4.832  &1.465    \\
                        & 60 & \textbf{2.425} & \textbf{1.043} &2.545  &1.061  &2.770  &1.125 &4.278 &1.41   &5.278  &1.560  &4.882  &1.483    \\
\midrule
 & 1st Count &12 & 13&12&11&0&0&0&0&0&0&0&0\\
\bottomrule
\end{tabular}
\label{tab:multi-benchmarks}
}
\end{table*}
\begin{table*}[h]
\centering
\caption{GLUE validation results. We report the mean of accuracy and F1 for QQP and MRPC, matthew correlations for CoLA, spearman correlations for STS-B, and accuracy for other tasks. For MNLI task, we consider the matched test set.}
\scalebox{1}{
\begin{tabular}{l|cccccccc|c}
\toprule
Model&MNLI&QQP&QNLI&SST-2&CoLA&STS-B&MRPC&RTE&Average\\
\midrule
BERT-Base &81.98	&89.25	&88.22	&91.07	&48.08	&87.98	&86.43	&69.98	&80.37\\
FNet-Base &73.20	&85.83	&80.57	&88.66	&40.67	&80.64	&80.88	&57.41	&73.48\\
PoNet-Base&77.02	&87.59	&84.37	&89.29	&45.38	&84.66	&81.82	&64.27	&76.80\\
S$^3$Attention (Ours)&76.86	&87.67	&84.12	&90.14	&46.72	&84.87	&81.84	&63.87	&77.01\\
\bottomrule
\end{tabular}
}
\label{tab:Glue}
\end{table*}
\begin{table*}[htbp]
\centering
\caption{Benchmark results of all Xformer models with a consistent batch size of 32 across all models with various input lengths { on the LRA text classification task The speed-up and memory-saving multipliers relative to Transformer shown in parentheses.}}
\scalebox{1}{
\begin{tabular}{l|cccc|cccc}
\toprule
& \multicolumn{4}{|c|}{ Training Speed (Steps per second) } & \multicolumn{4}{|c}{ Peak Memory Usage (GB) } \\
Model & $1 \mathrm{~K}$ & $2 \mathrm{~K}$ & $3 \mathrm{~K}$ & $4 \mathrm{~K}$ & $1 \mathrm{~K}$ & $2 \mathrm{~K}$ & $3 \mathrm{~K}$ & $4 \mathrm{~K}$ \\
\midrule
Transformer & $23.8$ & $7.8$ & $3.9$ & $OOM$ & $3.7$ & $11.1$ & $22.1$ & $OOM$ \\
Linformer & $37.0(1.5 \mathrm{x})$ & $20.8(2.6 \mathrm{x})$ & $14.9 ( 3.7 \mathrm { x } )$ & $11.9$ & $2 . 3$ & $3 . 3$ & $4.3$ & $5 . 2$ \\
Reformer & $28.5(1.2 \mathrm{x})$ & $15.1(1.9 \mathrm{x})$ & $11.9(3.0 \mathrm{x})$ & $9.1$ & $2.2$ & $3.2$ & $4.2$ & $4.9$ \\
Nystroformer & $33.3(1.4 \mathrm{x})$ & $22.7(2.9 \mathrm{x})$ & $17.2(4.3 \mathrm{x})$ & $14.7$ & $1.6$ & $2.2$ & $2.4$ & $2.9$ \\
Performer & $29.4(1.2 \mathrm{x})$ & $16. 9(1.9 \mathrm{x})$ & $8 . 7(11.7 \mathrm{x})$ & $9. 3$ & $2.3$ & $3.1$ & $4.0$ & $4.8$ \\
S$^3$Attention & $32.2(1.4 \mathrm{x})$ & $20.4(2.6 \mathrm{x})$ & $15.9(4.0 \mathrm{x})$ & $12.3$ & $1.8$ & $2.3$ & $2.8$ & $3.2$ \\
\bottomrule
\end{tabular}
}
\label{tab:speed_memory}
\end{table*}

\subsection{Transfer Learning in GLUE Tasks}
We evaluate the transfer learning ability of the proposed model in the pretraining-finetuning paradigm in NLP tasks. We pre-train vanilla BERT \citep{devlin2018bert}, FNet \citep{lee2021fnet}, PoNet \citep{tan2021ponet} and our S$^3$Attention with the same MLM loss in \citep{devlin2018bert} on English Wikitext-103 and BooksCorpus datasets. All models are uncased and pre-trained with the same configuration with 1 million steps at most. We report the best GLUE results for each model from multiple hyper-parameters configurations in Table \ref{tab:Glue}, and the detailed training configurations in Table~\ref{tab:BERT configure} in Table~\ref{tab:Glue}. Our S$^3$Attention reaches 77.01 average scores (\textbf{96.0\%} of the accuracy of vanilla BERT), which also outperform FNet by \textbf{4.6\%} and PoNet by 0.3\% relatively.

\subsection{Training Speed and Peak Memory Usage}
We compared the training speed (in terms of steps per second) and peak memory usage with several baseline models {in LRA text classification task with various input lengths. The results are reported in Table~\ref{tab:speed_memory}}. S$^3$Attention achieves a 4x time speed advantage and 87\% memory reduction compared to vanilla transformer models with 3k input setting and has a neck-to-neck performance compared to the most efficient baseline models.


\subsection{Robustness Analysis}
We conduct a noise-resistant experiment for S$^3$Attention and 5 other Attention based models as shown in Table~\ref{tab:robustness}. We use the Image experiment setting in LRA datasets. During generating sample sequences, we randomly add noise with uniform distribution $\mathcal{U}(-a,a)$ to each position in the sequences. We consider $a \in [0,2,4,8]$ and train every model with 5k steps and 5 replicates. S$^3$Attention remains robust with a high level of noise injection. This supports our theoretical robustness analysis and shows S$^3$Attention indeed makes an appropriate tradeoff between information preservation and noise reduction.

\begin{table*}
\centering
\caption{ Average Accuracy on Image task (CIFAR-10 dataset) in Long Range Arena with noise injections. The relative performance changes are reported in parentheses. }
\scalebox{1}{
\begin{tabular}{l|cccc}
\toprule
Noise level& 0 & 2 & 4 & 8 \\
\midrule
Transformer &41.39&	40.29 (-2.82\%)	&28.56 (-31.12\%)	&	28.12 (-32.18\%)	\\	
Linformer &38.43&	37.99 (-1.49\%)	&	37.04 (-3.95\%)	&	36.65 (-4.97\%)	\\	
Reformer &38.04	&37.64 (-1.12\%)	&	35.26 (-7.37\%)	&	34.88	 (-8.37\%)\\	
Nystroformer& 41.52	&40.89 (-1.66\%)&		38.39 (-7.67\%)	&	37.84 (-8.99\%)	\\	
Performer &42.66&	41.95 (-1.93\%)	&	39.61 (-7.40\%)	&	38.86 (-9.15\%)	\\	
S$^3$Attention& 57.47&	57.06 (-0.82\%)	&	55.32 (-3.84\%)	&	54.70 (-4.92\%)	\\	
\bottomrule
\end{tabular}
}
\label{tab:robustness}
\end{table*}




\subsection{Model Parameters Impact}
S$^3$Attention introduces three extra hyperparameters, $r, s_1$ and $s_2$. We test the influence when varying them and report results in Table~\ref{tab:impact}. We use S$^3$Attention $(r,s_1,s_2 = 8)$ as the baseline model and other parameters are reported in Table~\ref{tab:impact_params} in the supplementary material.  

\vspace{0.05in}\noindent{\bf Influence of $r$ in Fourier Convolution.}~~  The $r$ parameter is used to determine the number of segment-averages to compute in \eqref{eq:fft_1}. The smaller $r$ leads the matrix with more duplicate columns, and more details information is lost. On the other hand, according to Proposition~\ref{lem:2}, the larger $r$ would potentially decrease the memorization ability and yield a high approximation error. In Table~\ref{tab:varying r}, the best performance is observed when $r = 8$  or $r=16$.  For the case with $r = 1$, the token matrix is smoothed to rank one matrix, and the average accuracy drops 3.55 from the best setting. When the $r$ value goes larger than $16$, the accuracy in all experiments slightly decreases. We believe it is due to the over-fitting since the smoothed token matrix contains more flexibility and more irrelevant information training dataset is learned.

\vspace{0.05in}\noindent{\bf Influence of Sample Number $s_1$ in Row Attention.}~~ In Row Attention part, we randomly sample $s_1$ from key and value tokens. Table~\ref{tab:varying s_1} reports that the optimal sampling amounts are different among tasks. In Pathfinder task, the optimal result is associated with $s_1 = 256$, while the best performance of other tasks the reached with $s_1 = 32$. {Pathfinder} task requires learning extreme long-range dependence (the connectivity between two circles far away from each other). The lack of enough tokens leads to inaccurate long-range dependence estimation and damages the final results. For tasks like {Image} or { Retrieval}, the modest range dependence may already be enough to get promising performance, and we thus could use fewer token samples. 

\vspace{0.05in}\noindent{\bf Influence of Sample Number $s_2$ in Column Attention.}~~ In Column Attention,  $s_2$ columns are selected. The experiment results are shown in Table~\ref{tab:varying s_2}. When setting $s_2= 1$, average performance decreases by 13.24\%. Similar behavior is also observed in the first row of Table~\ref{tab:varying r} with $r = 1$. The information loss due to lack of rank limits the final performance. In an average sense, $s_2 = 16$ gives the best result, and further increasing in $s_2$ slightly harms the accuracy in all tasks except Pathfinder.


\begin{table}
\caption{Experimental results on varying $r,s_1$ and $s_2$. The best result is in boldface and the second best is underlined. And Ablation experiments for each component.}
\label{tab:impact}
\begin{subtable}{\linewidth}

\centering
\caption{Experimental results on varying $r$ parameter in smoothing component.}
\scalebox{1}{
\begin{tabular}{l|ccccc|c}
\toprule
$r$& LisOps& Text & Retrieval & Image & Pathfinder& Average\\
\midrule
$1$ & 37.30             &65.25              &78.65              &51.36              &71.23              &60.76\\ 
8 & \underline{38.30}   &\underline{69.27}  &{\bf83.26}         &\underline{53.90}  &\underline{75.82}         &\underline{64.11}\\ 
16 & {\bf 38.62}        &{\bf70.02}         &\underline{83.21}  &{\bf54.20}         &{\bf76.15}  &{\bf64.44}\\ 
32 & 38.19              &69.27              &82.05              &53.73              &75.58              &63.76\\ 
64 & 37.89              &69.73              &81.79              &51.28              &75.52              &63.24\\ 
\bottomrule
\end{tabular}
\label{tab:varying r}
}
\end{subtable}

\vspace{0.12in}
\begin{subtable}{\linewidth}
\centering
\caption{Experimental results on varying $s_1$ parameter in Row Attention.} \label{tab:varying s_1}
\scalebox{1}{
\begin{tabular}{l|ccccc|c}
\toprule
$s_1$& LisOps& Text & Retrieval & Image & Pathfinder& Average\\
\midrule
8   & \underline{38.30} &69.27   &\underline{83.26}  &\underline{53.90} &75.82   &64.11\\ 
32  & {\bf 38.44} &{\bf 70.85}   &{\bf 83.41}  &\bf{54.92} &77.97   &{\bf 65.12}\\ 
64  & 37.88 &\underline{70.53}   &83.02  &51.22 &78.02   &\underline{64.33}\\ 
128 & 37.33 &69.24   &81.58  &49.08 &\underline{78.12}   &63.07\\ 
256 & 37.02 &65.72   &79.30  &46.24 &{\bf78.14}   &61.29\\ 
\bottomrule
\end{tabular}
}
\end{subtable}

\vspace{0.12in}
\begin{subtable}{\linewidth}
\centering
\caption{Experimental results on varying $s_2$ parameter in Column Attention. }
\scalebox{1}{
\begin{tabular}{l|ccccc|c}
\toprule
$s_2$& LisOps& Text & Retrieval & Image & Pathfinder& Average\\
\midrule
$1$     &37.32	            &55.28	            &57.37	            &40.97	            &66.25	            &51.44\\
4       &\underline{37.82}	&52.05          	&72.58	            &46.74	            &73.17	            &57.47\\
8       &\bf{38.30}	        &\underline{69.27}	&\underline{83.26}	&\underline{53.90}  &75.82              &\underline{64.11}\\
16      &{37.77}	        &{\bf70.24}	        &{\bf 83.42}	    &{\bf 54.11}	    &\underline{77.92}  &{\bf64.73}\\
32      &37.62	            &68.32	            &80.11	            &51.66	            &{\bf78.18}	        &62.98\\
\bottomrule
\end{tabular}
\label{tab:varying s_2}
}
\end{subtable}
\hfill
\end{table}

\subsection{Learning Curve for LRA Experiments}\label{appendix:M}

In this section, we present the training and testing performance for the first 5000 training steps on five LRA datasets. All hyperparameters are kept the same as the baseline model in the ablation study. The average training accuracy/loss and test accuracy/loss are reported in Figure~\ref{fig:learning_curve}. The results indicate the smoothed tokens lead to better learning 
behavior and faster convergence, which supports its stabilization ability.
\begin{figure}[h]
\centering
\begin{subfigure}{0.24\textwidth}
    \includegraphics[width=\textwidth]{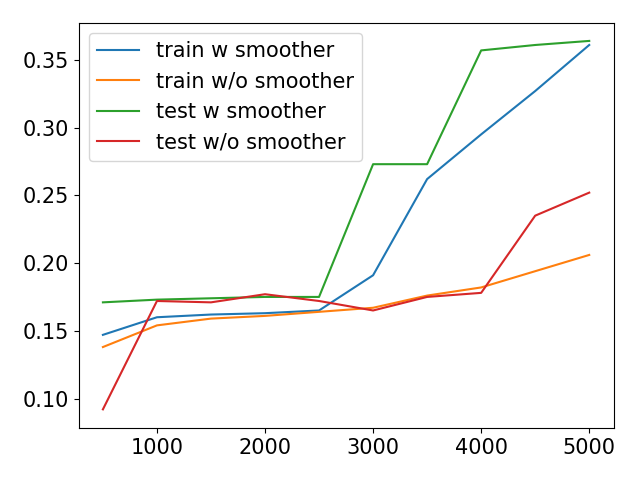}
    \caption{Listops Accuracy}
\end{subfigure}
\hfill
\begin{subfigure}{0.24\textwidth}
    \includegraphics[width=\textwidth]{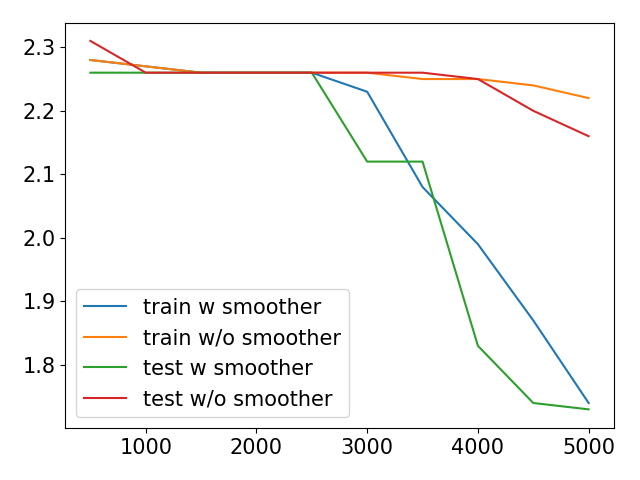}
    \caption{Listops Loss}
\end{subfigure}
        

\centering
\begin{subfigure}{0.24\textwidth}
    \includegraphics[width=\textwidth]{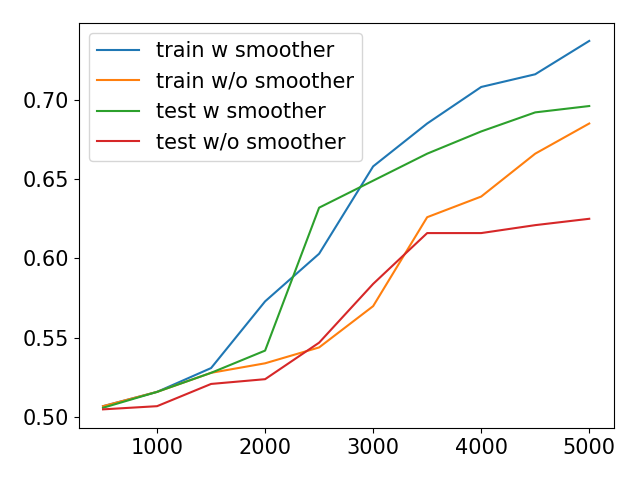}
    \caption{Text Accuracy}
\end{subfigure}
\hfill
\begin{subfigure}{0.24\textwidth}
    \includegraphics[width=\textwidth]{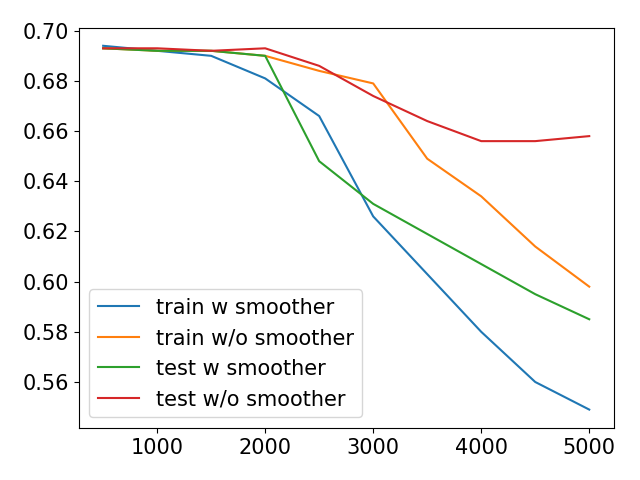}
    \caption{Text Loss}
\end{subfigure}
        

\centering
\begin{subfigure}{0.24\textwidth}
    \includegraphics[width=\textwidth]{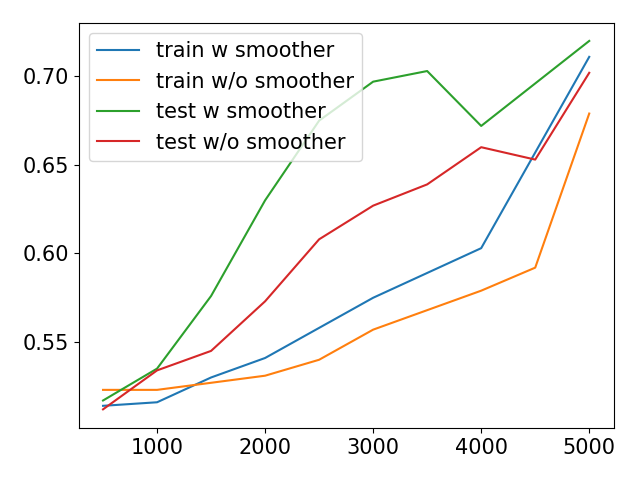}
    \caption{Retrieval Accuracy}
\end{subfigure}
\hfill
\begin{subfigure}{0.24\textwidth}
    \includegraphics[width=\textwidth]{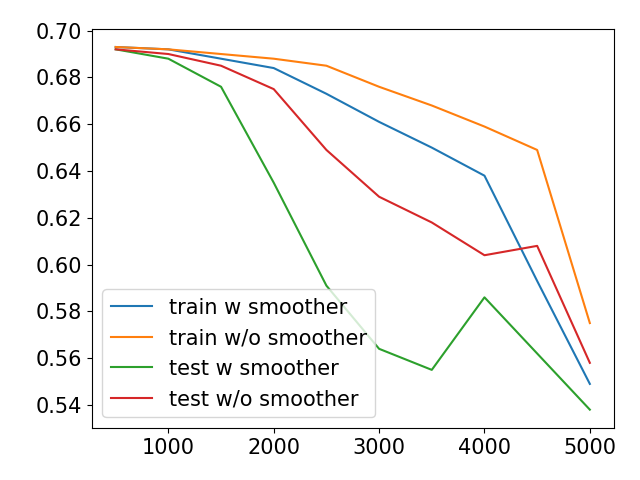}
    \caption{Retrieval Loss}
\end{subfigure}
        

\centering
\begin{subfigure}{0.24\textwidth}
    \includegraphics[width=\textwidth]{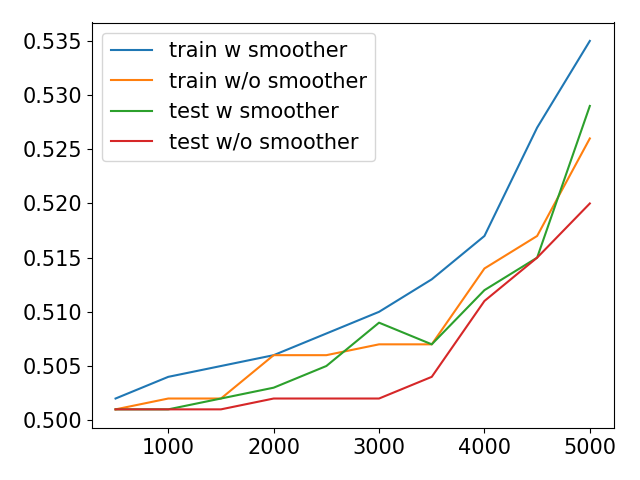}
    \caption{Pathfinder Accuracy}
\end{subfigure}
\hfill
\begin{subfigure}{0.24\textwidth}
    \includegraphics[width=\textwidth]{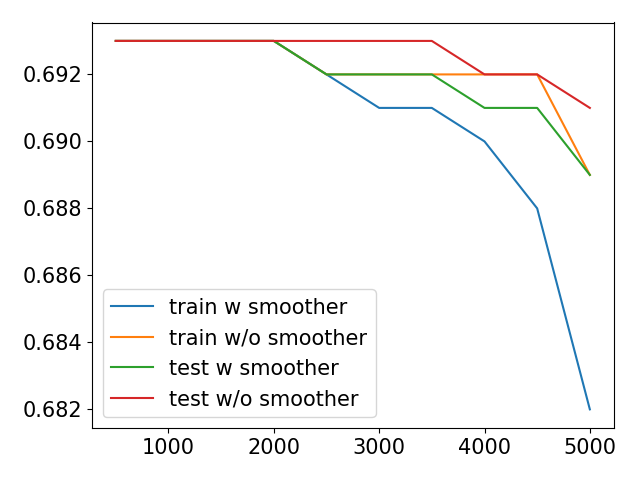}
    \caption{Pathfinder Loss}
\end{subfigure}
        

\centering
\begin{subfigure}{0.24\textwidth}
    \includegraphics[width=\textwidth]{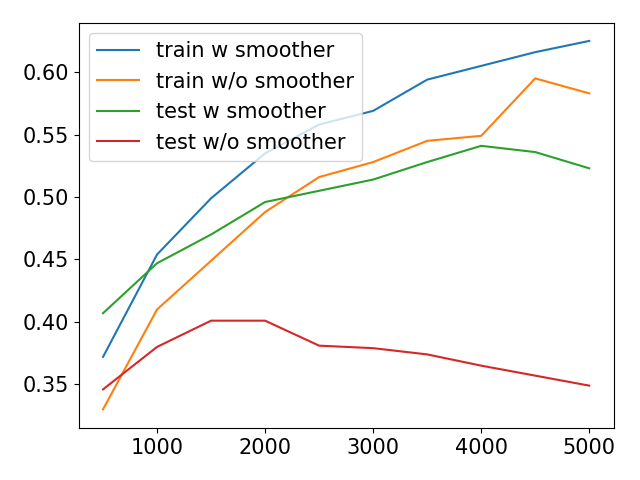}
    \caption{Image Accuracy}
\end{subfigure}
\hfill
\begin{subfigure}{0.24\textwidth}
    \includegraphics[width=\textwidth]{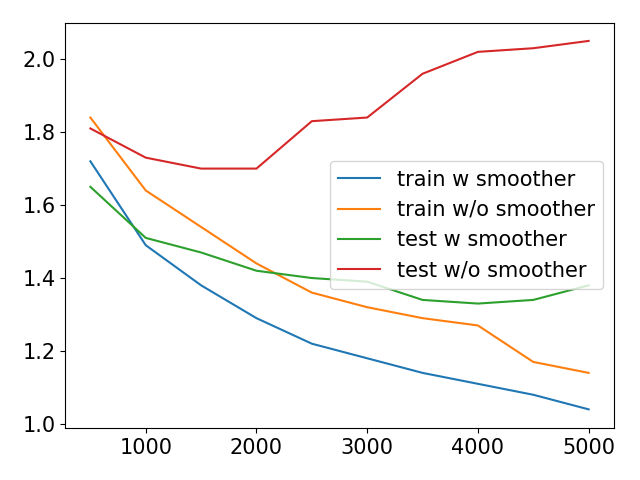}
    \caption{Image Loss}
\end{subfigure}
        
\caption{Learning Curve for LRA experiments.}
\label{fig:learning_curve}
\end{figure}

\subsection{Ablation Study}
This subsection provides an ablation test on four components: Fourier Convolution, Convolution Stem, Column Attention, and Row Attention. We use S$^3$Attention with ($r, s_1, s_2 = 8$) as the baseline, and the settings are detailed in Table~\ref{tab:ablation} in the supplementary material. In Table~\ref{tab:Ablations}, we present the accuracy changes when removing each component. The performance-decreasing results indicate all four components of S$^3$Attention are necessary to reach promising results. The most significant component is Column Attention which leads 8.28 average accuracy difference. It states that a good summary of the whole sequence is important. Similar observations are also reported in Transformer-LS \citep{zhu2021long} and XCiT \citep{ali2021xcit}, where the spirit of Attention over columns is used in the dynamic project and Cross-Covariance Attention respectively. The second most effective part is Fourier Convolution. It reaches a 13.89\% accuracy difference in the Retrieval task involving two 4k sequences. Fourier Convolution also works well on shorter sequence tasks (e.g., Image and Pathfinder) and brings a 6.12\% accuracy difference. 

\begin{table}
\centering
\caption{Ablation experiments. The S$^3$Attention $(r,s_1,s_2 = 8)$ is used as baseline.  The differences by removing each component from the baseline model are reported.}
\scalebox{0.92}{
\begin{tabular}{l|ccccc|c}
\toprule
Model& LisOps& Text & Retrieval & Image & Pathfinder& Average\\
\midrule
Baseline	    &38.30	&69.27	&83.26	&53.90	&75.82	&64.11\\
\midrule
Four.~Conv.	&-0.47	&-4.04	&-13.98	&-5.64	&-6.59	&-6.14\\
Conv.~Stem	    &-0.13	&-0.55	&-1.51	&-1.76	&{-0.47}	&-0.88\\
Col.~Attn.	    &-1.16	&-8.00	&-9.16	&-10.63	&-12.45	&-8.28\\
Row~Attn.	    &-0.38	&-1.92	&-1.97	&-2.64	&-2.56	&-1.89\\
\bottomrule
\end{tabular}
}
\label{tab:Ablations}
\end{table}

\section{Concluding Remarks} \label{sec:conclusion}
We propose S$^3$Attention, a robust and efficient Attention architecture for modeling long sequences with a good balance between feature preserving and noise resistance. It aggregates a Fourier convolutional stem smoothing information among tokens and a Skeleton-Sketching-inspired efficient Attention. In particular, our proposed Skeleton Attention directly samples the columns and rows of the token matrix. Such a design increases the model's robustness and gives us a positive near-linear complexity side effect. We conduct a thorough theoretical and experimental analysis of the proposed model and show its effectiveness. Lastly, extensive experiments show that the proposed model achieves the best performance on Long Range Arena and a state-of-art performance in long-term time series forecasting tasks compared to various Attention-based baselines.

One limitation of the current S$^3$Attention is that we need to use both FFT and IFFT in a sequential manner, which is potentially slower than the existing Fourier-based Attention architectures (e.g., \citep{lee2021fnet}) that only involve the FFT. As our primary goal using Fourier convolution is to smooth the token matrix and reduce the incoherent parameter, we can use Random Fourier Transformation \citep{ailon2006approximate} to modify S$^3$Attention with only FFT. Another limitation is that the size of $\bm{L}$ matrix in the Fourier Convolution part is the same as the input sequences. On longer sequences, $\bm{L}$ will contain more learnable parameters that make the model easier to overfit. We may introduce low rankness or use a more sophisticated design, such as \citep{gu2021efficiently}, to tackle this issue in the future.

Another future research direction is to incorporate the breakthroughs in state-space model fields (e.g., \citep{gu2020hippo,gu2021efficiently,gupta2022diagonal,ma2023mega}). Those works obtain very competitive performance and even beat various Attention variant architectures by a large margin in several long sequence tasks. In recent work \citep{wang2022pretraining}, the state space model even reaches comparable performance in large-scale NLP pretraining tasks to the BERT-type models. It is possible to further boost S$^3$Attention's performance by combining with those breakthroughs.


\bibliographystyle{IEEEtran}
\bibliography{IEEEabrv,reference}


\newpage

\setcounter{page}{1}
\renewcommand{\thepage}{S-\arabic{page}}
\setcounter{table}{0}
\renewcommand{\thetable}{S\arabic{table}}

\onecolumn
\begin{center}
\huge
Supplementary Material for \\
S$^3$Attention: Improving Long Sequence Attention with Smoothed
Skeleton Sketching  
\end{center}

\appendices
\subsection{Algorithms}
\label{Algorithms_all}

In this section, the pseudo-codes of the proposed methods are presented in Algorithms 1 and 2. They give the model architectures of the Skeleton Attention and Smoother component. Algorithms~3 and 4 give the details for forecasting experiments in Section~\ref{sec:forecasting}. The implemented code is available online at \url{https://github.com/wxie9/S3Attention}. 
\begin{algorithm}[h]
	\caption{Skeleton Attention}
	\label{alg:code1}
	\definecolor{codeblue}{rgb}{0.25,0.5,0.5}
	\lstset{
		backgroundcolor=\color{white},
		basicstyle=\fontsize{7.2pt}{7.2pt}\ttfamily\selectfont,
		columns=fullflexible,
		breaklines=true,
		captionpos=b,
		commentstyle=\fontsize{7.2pt}{7.2pt}\color{codeblue},
		keywordstyle=\fontsize{7.2pt}{7.2pt},
	}
    
    \begin{lstlisting}[language=python]
   class Skeleton_Attention(nn.Module):
    def __init__(self, num_head = 2, head_dim = 32,seq_len, left_rank = 8,right_rank = 8, dropout = 0.1):
        super(Skeleton_Attention, self).__init__()
        self.num_head = num_head
        self.head_dim = head_dim
        self.seq_len = seq_len
        self.left_rank = left_rank
        self.right_rank = right_rank
        
        self.ln_1 = nn.LayerNorm(self.num_head * self.head_dim)
        self.ln_2 = nn.LayerNorm(self.num_head * self.head_dim)

        self.drop_attn = torch.nn.Dropout(p=dropout)        
        
        self.index_set_right =   torch.randperm(self.head_dim)
        self.index_set_right = self.index_set_right[:self.right_rank] 
        
        self.index_set_left =   torch.randperm(self.seq_len)
        self.index_set_left = self.index_set_left[:self.left_rank]

    def combine_heads(self, X):
        X = X.transpose(1, 2)
        X = X.reshape(X.size(0), X.size(1), self.num_head * self.head_dim)
        return X

    def split_heads(self, X):
        X = X.reshape(X.size(0), X.size(1), self.num_head, self.head_dim)
        X = X.transpose(1, 2)
        return X
        
    def forward(self,Q, K, V):
        #### Row Attention ####        
        if self.left_rank <= self.seq_len:
            K1 = K[:,:,self.index_set_left,:]
            V1 = V[:,:,self.index_set_left,:]
        else:
            K1 = K
            V1 = V

        dots = Q @ K1.transpose(-1,-2)  
        dots = dots / math.sqrt(self.head_dim)
        attn = nn.functional.softmax(dots,dim=-1)
        attn = self.drop_attn(attn)
        
        #### Column Attention ####          
        Q2 = Q.transpose(-1,-2)
        if self.right_rank <= self.head_dim:

            K2 = K[:,:,:,self.index_set_right]
            V2 = V[:,:,:,self.index_set_right]
        else:
            K2 = K
            V2 = V
    
        dots_r = Q2 @ K2
        dots_r = dots_r / math.sqrt(self.seq_len)
        attn_r = nn.functional.softmax(dots_r,dim=-1).transpose(-1,-2)
        attn_r = self.drop_attn(attn_r)

        X = self.split_heads(self.ln_1(self.combine_heads(torch.matmul(attn,V1))))/2 + self.split_heads(self.ln_2(self.combine_heads(torch.matmul(V2,attn_r))))/2
 
        return X
    \end{lstlisting}
\end{algorithm}

\begin{algorithm}[h]
	\caption{Smoother component}
	\label{alg:code2}
	\definecolor{codeblue}{rgb}{0.25,0.5,0.5}
	\lstset{
		backgroundcolor=\color{white},
		basicstyle=\fontsize{7.2pt}{7.2pt}\ttfamily\selectfont,
		columns=fullflexible,
		breaklines=true,
		captionpos=b,
		commentstyle=\fontsize{7.2pt}{7.2pt}\color{codeblue},
		keywordstyle=\fontsize{7.2pt}{7.2pt},
	}
    
    \begin{lstlisting}[language=python]
class Smoother(nn.Module):

    def __init__(self, hidden_size, seq_len, dropout = 0.5, num_head = 2,transformer_dim = 64, fold = 1):

        super(Smoother, self).__init__()

        self.hidden_size = hidden_size
        self.seq_len = seq_len
        self.dropout = dropout
        self.num_head = num_head
        self.dim = transformer_dim
        self.fold = fold
        
        self.weights_fft = nn.Parameter(torch.empty(self.seq_len//2+1, self.hidden_size,2))
        nn.init.kaiming_normal_(self.weights_fft, mode='fan_in', nonlinearity='relu')
        
        self.tiny_conv_linear =  torch.nn.Conv1d(in_channels = self.hidden_size*2 , out_channels = self.hidden_size, kernel_size = 3, padding=  1, groups = 1)
        self.dropout = torch.nn.Dropout(p=self.dropout)
        self.bn_1 = nn.BatchNorm1d(self.seq_len)
        
    def forward(self, x):
  
        #### Compute Segment Average ####    
        B,S,H = x.shape
        u = x.reshape(B,S,self.fold,H//self.fold)
        u = torch.mean(u,dim = -1)
        
        #### Fourier Convolution ####   
        fft_u = fft.rfft(u, n =  self.seq_len, axis = -2)
        fft_u = torch.view_as_real(fft_u)
        fft_u = fft_u.repeat(1,1,H//self.fold,1)
        self.weight_used = self.weights_fft.unsqueeze(0)
        temp_real = fft_u[...,0]*self.weight_used[...,0] - fft_u[...,1]*self.weight_used[...,1]
        temp_imag = fft_u[...,0]*self.weight_used[...,1] + fft_u[...,1]*self.weight_used[...,0]
        out_ft = torch.cat([temp_real.unsqueeze(-1),temp_imag.unsqueeze(-1)],dim =  -1)
        out_ft = torch.view_as_complex(out_ft) 
        m = fft.irfft(out_ft, n =  self.seq_len, axis = -2)
                      
        #### Convolution Stem #### 
        input_h = torch.cat((m, x), dim = -1) 
        h =  self.tiny_conv_linear(input_h.permute(0,2,1)).permute(0,2,1)
        h = self.dropout(F.relu(self.bn_1(h)))
    
        return   h
    \end{lstlisting}
\end{algorithm}

\begin{algorithm}[h]
	\caption{pseudo code for  Time-Series Forecasting}
	\label{alg:code3}
	\definecolor{codeblue}{rgb}{0.25,0.5,0.5}
	\lstset{
		backgroundcolor=\color{white},
		basicstyle=\fontsize{7.2pt}{7.2pt}\ttfamily\selectfont,
		columns=fullflexible,
		breaklines=true,
		captionpos=b,
		commentstyle=\fontsize{7.2pt}{7.2pt}\color{codeblue},
		keywordstyle=\fontsize{7.2pt}{7.2pt},
	}
    
    \begin{lstlisting}[language=python]
    def forward(self, x_in):
        B1,H1,C1 = x_in.shape
        for i in range(len(self.encoder)):
            attn_layer = self.encoder[i]
            #standardize the input data
            if i == 0: 
                tmp_mean = torch.mean(x_in[:,:,:],dim = 1,keepdim = True)
                tmp_std = torch.sqrt(torch.var(x_in[:,:,:],dim = 1,keepdim = True)+1e0)
                x_in = (x_in - tmp_mean)/(tmp_std) 

                enc_out1 = self.enc_embedding(x_in)
         
            enc_out1= attn_layer(enc_out1) + enc_out1 
        
        #decoder via Fourier Extrapolation
        dec_out = self.fourierExtrapolation(post(enc_out1))
        output = (dec_out.reshape(B1,-1,C1))*(tmp_std)+tmp_mean 
        return  output
    \end{lstlisting}
\end{algorithm}

\begin{algorithm}[h]
	\caption{Fourier Extrapolation}
	\label{alg:code4}
	\definecolor{codeblue}{rgb}{0.25,0.5,0.5}
	\lstset{
		backgroundcolor=\color{white},
		basicstyle=\fontsize{7.2pt}{7.2pt}\ttfamily\selectfont,
		columns=fullflexible,
		breaklines=true,
		captionpos=b,
		commentstyle=\fontsize{7.2pt}{7.2pt}\color{codeblue},
		keywordstyle=\fontsize{7.2pt}{7.2pt},
	}
    
    \begin{lstlisting}[language=python]
class fourierExtrapolation(nn.Module):
    def __init__(self,inputSize,n_harm = 8,n_predict = 96):
        super().__init__()
        self.n = inputSize
        self.n_harm = n_harm
        self.f = torch.fft.fftfreq(self.n)     
        self.indexes = list(range(self.n))
        
        
        # sort indexes by frequency, lower -> higher
        self.indexes.sort(key = lambda i: torch.absolute(self.f[i]))
        self.indexes = self.indexes[:1 + self.n_harm * 2]

        self.n_predict = n_predict

        
        # compute init phase 
        self.t = torch.arange(0, self.n + self.n_predict)
        self.t1 = self.t.unsqueeze(0).unsqueeze(-1).float().to('cuda')
        self.f = self.f.unsqueeze(0).unsqueeze(-1).to('cuda')
        self.t = self.t.unsqueeze(0).unsqueeze(-1).unsqueeze(-1)to('cuda')
        self.g = self.f[:,self.indexes,:].permute(0,2,1).unsqueeze(1)
        self.phase_init = 2 * 3.1415 * self.g * self.t
           
        
    def fourierExtrapolation(self,x):

        # x in frequency domain
        x_freqdom = torch.fft.fft(x,dim = -2)    
        x_freqdom = torch.view_as_real(x_freqdom)
        # select importance frequencies 
        x_freqdom = x_freqdom[:,self.indexes ,:,:]
        x_freqdom = torch.view_as_complex(x_freqdom)
        ampli = torch.absolute(x_freqdom) / self.n   # amplitude
        phase = torch.angle(x_freqdom)          # phase

        ampli = ampli.permute(0,2,1).unsqueeze(1)
        phase = phase.permute(0,2,1).unsqueeze(1)

        self.restored_sig = ampli * torch.cos(self.phase_init + phase)

        return torch.sum(self.restored_sig,dim = -1)
    \end{lstlisting}
\end{algorithm}

\subsection{Experiment Configurations}\label{sec:exp_config}
In this section, we report the configurations in Tables~\ref{tab:config_888}-\ref{tab:ablation} for the experiments in Section~\ref{sec:experiment}.
\begin{table}[H]
\centering
\caption{Experiment Configuration of S$^3$Attention $(r,s_1,s_2=8)$.}
\scalebox{1}{
\begin{tabular}{l|ccccc}
\toprule
Parameters& ListOps& Text & Retrieval & Image & Pathfinder\\
\midrule
Epoch                   &5          &30     &15      &60      &100  \\
Learning Rate           &1e-4       &1e-4     &1e-4      &1e-3      &1e-4   \\
Weight Decay            &0          &1e-2     &1e-2      &1e-2      &1e-2   \\
Batch Size              &32         &32     &32      &256      &256  \\
$r,s_1,s_2$                     &8,8,8           &8,8,8       &8,8,8       &8,8,8        &8,8,8    \\
dropout in embedding    &0          &0.5     &0.1      &0.1      &0   \\
dropout in attention    &0          &0.1     &0.1      &0.1      &0   \\
dropout in smoother     &0          &0.5     &0.1      &0.5      &0.5   \\
\bottomrule
\end{tabular}
}
\label{tab:config_888}
\end{table}
\begin{table}[H]
\centering
\caption{Experiment Configuration of S$^3$Attention (best).}
\scalebox{1}{
\begin{tabular}{l|ccccc}
\toprule
Parameters& ListOps& Text & Retrieval & Image & Pathfinder\\
\midrule
Epoch                   &10         &30     &15      &60      &100  \\
Learning Rate           &1e-4       &1e-4     &1e-4      &1e-3      &1e-4   \\
Weight Decay            &1e-2          &1e-2     &1e-2      &1e-2      &1e-2   \\
Batch Size              &32         &32     &32      &256      &256  \\
$r,s_1,s_2$                     &8,8,8             &8,8,8       &8,32,32      &8,16,16     &8,128,32  \\
dropout in embedding    &0          &0.5     &0.1      &0.5      &0.1   \\
dropout in attention    &0          &0.1     &0.1      &0.1      &0.1   \\
dropout in smoother     &0          &0.5     &0.1      &0.5      &0.5   \\
\bottomrule
\end{tabular}
}
\label{tab:config_best}
\end{table}
\begin{table}[H]
\centering
\caption{Experiment Configuration for Model Parameters Impact.}
\scalebox{1}{
\begin{tabular}{l|ccccc}
\toprule
Parameters& ListOps& Text & Retrieval & Image & Pathfinder\\
\midrule
Epoch                   &5          &30     &15      &60      &100  \\
Learning Rate           &1e-4       &1e-4     &1e-4      &1e-3      &1e-4   \\
Weight Decay            &0          &1e-2     &1e-2      &1e-2      &1e-2   \\
Batch Size              &32         &32     &32      &256      &256  \\
dropout in embedding    &0          &0.5     &0.1      &0.1      &0   \\
dropout in attention    &0          &0.1     &0.1      &0.1      &0   \\
dropout in smoother     &0          &0.5     &0.1      &0.5      &0.5   \\
\bottomrule
\end{tabular}
}
\label{tab:impact_params}
\end{table}
\begin{table}[h]
\centering
\caption{Experiment Configuration for Ablation.}
\scalebox{1}{
\begin{tabular}{l|ccccc}
\toprule
Parameters& ListOps& Text & Retrieval & Image & Pathfinder\\
\midrule
Learning Rate           &1e-4       &1e-4     &1e-4      &1e-3      &1e-4   \\
Weight Decay            &0          &1e-2     &1e-2      &1e-2      &1e-2   \\
Batch Size              &32         &32     &32      &256      &256  \\
$r,s_1,s_2$                      &8,8,8            &8,8,8        &8,8,8         &8,8,8         &8,8,8    \\
dropout in embedding    &0          &0.5     &0.1      &0.1      &0   \\
dropout in attention    &0          &0.1     &0.1      &0.1      &0   \\
dropout in smoother     &0          &0.5     &0.1      &0.5      &0.5   \\
\bottomrule
\end{tabular}
}
\label{tab:ablation}
\end{table}


\subsection{Additional Results on LRA}
We have already provided the average of five runs with different random seeds in Table~\ref{tab:lra-benchmarks}. Here we also provide the standard deviations for these experiments in Table~\ref{tab:error bar}.

\begin{table}[h]
\centering
\caption{ Accuracy on Long Range Arena (LRA) with standard errors shown in parenthesis. All results are averages of 5 runs with different random seeds.}
\scalebox{1}{
\begin{tabular}{l|ccccc}
\toprule
Model& LisOps& Text & Retrieval & Image & Pathfinder\\
\midrule
S$^3$Attention ($r,s_1,s_2= 8$) & 38.30 (0.40)&69.27 (0.83)&83.26 (0.45) &53.90 (1.54) &75.82 (0.97)\\ 
S$^3$Attention (best)  & 39.15 (0.48)& 71.58 (0.95)&83.73 (0.61)  &57.73 (1.83) &78.20 (1.32)\\ 
\bottomrule
\end{tabular}
}
\label{tab:error bar}
\end{table}

\subsection{Dataset and Implementation Details}
In this section, we summarize the details of the datasets used in this paper as follows: 

LRA datasets: {\bf ListOps}(2K length mathematical expression task which investigates the parsing ability); {\bf Text}
(up to 4K byte/character-level document classification task that tests capacity in character compositionality); {\bf Retrieval} (byte/character-level document matching task, which examines the information compression ability with two 4K length sequences); {\bf Image} (pixel-wise sequence image classification based on the CIFAR-10 dataset); {\bf Pathfinder} (long-range spatial dependency identification task. The input images contain two small
points/circles and dash-line paths. The model needs to identify whether two points/circles are connected); The LRA has several desirable advantages that made us focus on it as the evaluation benchmark: {\bf generality} (only requires the encoder part); {\bf simplicity} (data augmentation and
pretraining are out of scope); {\bf challenging long inputs} (difficulty enough and room to improve); {\bf  diversity aspects} (tasks covering math, language, image, and spatial modeling);
and {\bf lightweight} (run with low resource requirement). 

Time series datasets:1) ETT~\citep{haoyietal-informer-2021} dataset contains two sub-datasets: ETT1 and ETT2, collected from two separated counties. Each of them has two versions of sampling resolutions (15min \& 1h). ETT dataset contains multiple time series of electrical loads and one time sequence of oil temperature. 2) Electricity\footnote{\url{https://archive.ics.uci.edu/ml/datasets/ElectricityLoadDiagrams20112014}.} dataset contains the electricity consumption for more than three hundred clients with each column corresponding to one client. 
\renewcommand{\citenumfont}[1]{S#1}
3) Exchange~\citep{lai2018modeling-exchange-dataset} dataset contains the current exchange of eight countries. 
\renewcommand{\citenumfont}[1]{#1}
4) Traffic\footnote{\url{http://pems.dot.ca.gov}.} dataset contains the occupation rate of freeway systems in California, USA. 5) Weather\footnote{\url{https://www.bgc-jena.mpg.de/wetter}.} dataset contains 21 meteorological indicators for a range of one year in Germany. 6) Illness\footnote{\url{https://gis.cdc.gov/grasp/fluview/fluportaldashboard.html}.} dataset contains the influenza-like illness patients in the United States. Table~\ref{tab:dataset} 
summarizes all the features of the six benchmark datasets. They are all split into the training set, validation set and test set by the ratio of 7:1:2 during modeling. 

\begin{table}[H]
\caption{Details of time series benchmark datasets.}
\label{tab:dataset}
\begin{center}
\begin{small}
\begin{sc}
\begin{tabular}{l|cccr}
\toprule
Dataset & Length & Dimension & Frequency \\
\midrule
ETTm2 & 69680 & 8 & 15 min\\
Exchange & 7588 & 9 & 1 day\\
Weather & 52696 & 22 & 10 min & \\
Electricity & 26304 & 322 & 1h & \\
ILI & 966 & 8 & 7 days\\
Traffic & 17544 & 863 & 1h & \\
\bottomrule
\end{tabular}
\end{sc}
\end{small}
\end{center}
\vskip -0.1in
\end{table}

GLUE datasets:  The GLUE benchmark covers various natural language understanding tasks and is widely used in evaluating transferring ability. The tasks can be divided into two types, single-sentence tasks (SST-2 and CoLA), and sentence-pair tasks (MNLI, QQP, QNLI, STS-B, MRPC, and RTE). Following the same settings in \citep{devlin2018bert}, we exclude WNLI task.

\subsection{Experiments on the Smoothness Effect of Fourier Convolution}
In this section, we verify Fourier convolution component in the Smoother block can reduce the incoherence value in the early training stage. We use S$^3$Attention with ($r, s_1, s_2 = 8$) as the test model and test on an NLP dataset: Text, and a vision dataset: Pathfinder. We compute the  $\mu$-incoherence value. of the token matrix before and after the Fourier convolution (denoted as $\mu_{\bm{X}}$ and $\mu_{\bm{X}^{\mathrm{smooth}}}$, respectively) for each sample in the validation dataset. Since we do not explicitly force the token matrix to be low-rank required by Definition~\ref{def:incoh}, we report the incoherence value for different rankness settings ($\mathrm{rank}  = 16$ and $\mathrm{rank}  = 32$) approximately, and the mean and standard deviation of incoherence value can be found in Table~\ref{tab:Incoherence}. The average incoherence value was reduced 30\% after the Fourier convolution in both datasets. Moreover, We observe that the standard deviation significantly decreases, which suggests the Fourier convolution may also potentially stabilize the training procedure.

\begin{table}[H]
\centering
\caption{The average incoherence parameters after 100 training steps with standard errors shown in the parenthesis.}
\scalebox{1}{
\begin{tabular}{l|cccc}
\toprule
Dataset& $\mu_{\bm{X}}\ (\mathrm{rank} = 32)$& $\mu_{\bm{X}^{\mathrm{smooth}}}\ (\mathrm{rank}  = 32)$ & $\mu_{\bm{X}}\ (\mathrm{rank}  = 16)$& $\mu_{\bm{X}^{\mathrm{smooth}}}\ (\mathrm{rank}  = 16)$\\
\midrule
Text& 2.75 (0.027) & 2.05 (0.007)&3.98 (0.046)& 3.23 (0.038)\\
Pathfinder& 3.83 (0.221)& 1.99 (0.001)&4.88 (0.264)&3.48 (0.001) \\
\bottomrule
\end{tabular}
}
\label{tab:Incoherence}
\end{table}

\begin{table}[H]
\centering
\caption{The training configurations for Pretraining and GLUE tasks}
\scalebox{1}{
\begin{tabular}{l|cc}
\toprule
& Pre-training&GLUE \\
\midrule
Max Steps&1000K&-\\
Max Epochs&-&[4,20]\\
Learning Rate&1e-4&[5e-5,1e-4]\\
Batch Size&256&[16,32]\\
Warm-up Steps&5000&-\\
Sequence Length&512&128\\
Learning Rate Decay&-&Linear\\
Clip&-&1\\
Dropout&-&0.1\\
\bottomrule
\end{tabular}
}
\label{tab:BERT configure}
\end{table}






\renewcommand{\bibnumfmt}[1]{[S#1]}

\renewcommand{\refname}{Additional References}

\vfill

\end{document}